\pdfminorversion 3

\documentclass[journal,twocolumn,final]{IEEEtran}
\usepackage{cite}

%
\ifCLASSINFOpdf
\else
\fi

\usepackage{graphicx}
\usepackage{wrapfig}
\usepackage{color}
\usepackage{tikz}
\usetikzlibrary{decorations.pathreplacing,angles,quotes}

%
\usepackage{amsmath}
\usepackage{amssymb}
\usepackage{algorithm2e}
\usepackage{amsthm}

\DeclareMathOperator*{\argmin}{arg\,min}
\DeclareMathOperator*{\argmax}{arg\,max}

\newtheorem{theorem}{Theorem}
%


\hyphenation{op-tical net-works semi-conduc-tor}
\usetikzlibrary{external}
\begin{document}
%
\title{Alignment Distances on Systems of Bags}
%
%
%

\author{Alexander Sagel and Martin Kleinsteuber
\thanks{Copyright \copyright\ 2017 IEEE. Personal use of this material is permitted. However, permission to use this material for any other purposes must be obtained from the IEEE by sending an email to pubs-permissions@ieee.org. 

The authors are with the Department of Electrical and Computer Engineering at the Technical University of Munich, Arcisstr. 21, 80333 M\"unchen, Germany. M. Kleinsteuber is also with Mercateo AG, F\"urstenfelder Str. 5, 80331 M\"unchen“, e-mail: a.sagel@tum.de, martin.kleinsteuber@mercateo.com}
}
%
%

\markboth{Preprint}%
{Shell \MakeLowercase{\textit{et al.}}: Bare Demo of IEEEtran.cls for IEEE Journals}
%



\maketitle

\begin{abstract}
Recent research in image and video recognition indicates that many visual processes can be thought of as being generated by a time-varying generative model. A nearby descriptive model for visual processes is thus a statistical distribution that varies over time. Specifically, modeling visual processes as streams of histograms generated by a kernelized linear dynamic system turns out to be efficient.  We refer to such a model as a System of Bags. In this work, we investigate Systems of Bags with special emphasis on dynamic scenes and dynamic textures. Parameters of linear dynamic systems suffer from ambiguities. In order to cope with these ambiguities in the kernelized setting, we develop a kernelized version of the alignment distance. For its computation, we use a Jacobi-type method and prove its convergence to a set of critical points. We employ it as a dissimilarity measure on Systems of Bags. As such, it outperforms other known dissimilarity measures for kernelized linear dynamic systems, in particular the Martin Distance and the Maximum Singular Value Distance, in every tested classification setting. A considerable margin can be observed in settings, where classification is performed with respect to an abstract mean of video sets. For this scenario, the presented approach can outperform state-of-the-art techniques, such as Dynamic Fractal Spectrum or Orthogonal Tensor Dictionary Learning.
\end{abstract}

\begin{IEEEkeywords}
Dynamic texture, dynamic scene, Stiefel manifold, kernel trick, nonlinear dynamic system, Fr\'echet mean
\end{IEEEkeywords}

%
\IEEEpeerreviewmaketitle

\section{Introduction}
%
%
%
%
\IEEEPARstart{M}{any} of the most successful classification frameworks for videos employ generative models of visual processes, where videos are modeled as distributions of descriptors. Prominent examples are \emph{Local Binary Patterns in Three Orthogonal Plains} (LBP-TOP) \cite{zhao2007dynamic} and \emph{Bags of Systems} (BoS) \cite{mumtaz2015scalable,mumtaz2013clustering,ravichandran2013categorizing}. Typically, the descriptors in question are \emph{local} in the spatiotemporal domain, but the distributions are \emph{global}, neglecting their spatial and temporal order. This has proven successful in many classification problems, however, there are scenarios where such a procedure could turn out suboptimal for several reasons. Spatiotemporally local descriptors are supposed to capture the dynamics locally in space and time. This is sensible for the recognition of dynamic textures on a small scale, but for large-scale dynamic textures or real-world dynamic scenes, the temporal dynamics on a global scale is a more distinguishing feature: for instance, a traffic scene is more characterized by the appearance and disappearance of vehicles than by the movement of the trees on the roadside. Furthermore, breaking up the global temporal order of the overall visual process can be problematic in cases where the appearance of the frames changes as a whole over the course of time, e.g. when observing outdoor scenes under changing weather conditions. In such cases, the video can have semantic features that may get lost by destroying the temporal order.  On the other hand, breaking up the global spatial order contradicts the everyday observation that looking at single frames of a visual process often suffices to distinguish between dynamic scenes or high-resolution textures. In such cases, employing well established still image feature extraction methods on the isolated frames can be a sensible step in the feature extraction of the overall visual process.

Remarkably, for still image textures and still image scenes, generative, distribution-based models have proven their efficacy at several occasions. Texture images, being often thought of as realizations of stochastic processes \cite{mallat2016understanding}, have a long standing tradition of distribution based models \cite{haralick1979statistical,unser1986sum,ojala1996comparative,do2002wavelet}. Meanwhile, the concept of BoS is inspired by the \emph{Bag of Words} (BoW) \cite{fei2005bayesian} paradigm, where images are described by the frequency of previously learned features contained in them. BoW based methods have been successfully  employed in the task of distinguishing still-image scenes \cite{juneja2013blocks,zhou2013scene}. Moreover, outstanding  performance on dynamic scene recognition can be achieved when "bags" of spatiotemporal features are computed on a temporally local scale. For instance, the authors of \cite{feichtenhofer2014bags} propose computing several temporally localized bags of oriented filter response features from  videos and produce outstanding results. The classification is performed by a majority vote that encompasses all of the computed bags in a video.

We conclude that, employing generative, distribution based models for the individual frames, or alternatively, localized collections of frames of visual processes such as dynamic textures and dynamic scenes is a promising approach. In the classical case, including many BoW based approaches, these models are histograms. Alternatively, they can also be represented by \emph{Fisher Vectors} \cite{perronnin2010improving} or statistical moments of a parametrized distribution model \cite{do2002wavelet}. We will focus on the classical view in the following, even though the proposed methods can be easily generalized to the other perspectives. This leads to the assumption that individual frames of a visual process can be well modeled by histograms. However, a model treating visual processes as sequences of histograms neglects their temporal dynamics and thus fails to generalize from one sample to the whole process.

A remedy is to derive a dynamic model for the temporal evolution of the histograms.
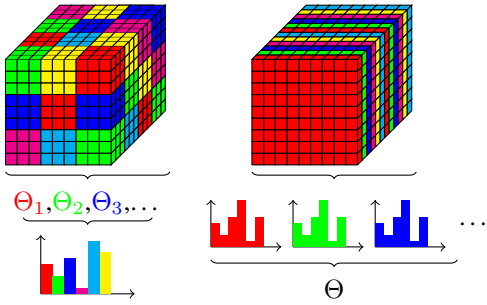
\begin{figure}
	\begin{center}
		
	\begin{tikzpicture}[scale=0.78]
		\foreach \y in {1,2,3} {  
			\foreach \x in {1,2,3} {  
				\filldraw[fill=green] (1.8,\x/5,\y/5) -- (1.8,\x/5,\y/5+0.2) -- (1.8,\x/5+0.2,\y/5+0.2) -- (1.8,\x/5+1/5,\y/5) -- (1.8,\x/5,\y/5);
	        }
        }
        \foreach \y in {1,2,3} {  
        	\foreach \x in {1,2,3} {  
        		\filldraw[fill=magenta] (1.8,0.6+\x/5,\y/5) -- (1.8,0.6+\x/5,\y/5+0.2) -- (1.8,0.6+\x/5+0.2,\y/5+0.2) -- (1.8,0.6+\x/5+1/5,\y/5) -- (1.8,0.6+\x/5,\y/5);
        	}
        }
        \foreach \y in {1,2,3} {  
           	\foreach \x in {1,2,3} {  
               	\filldraw[fill=blue] (1.8,1.2+\x/5,\y/5) -- (1.8,1.2+\x/5,\y/5+0.2) -- (1.8,1.2+\x/5+0.2,\y/5+0.2) -- (1.8,1.2+\x/5+1/5,\y/5) -- (1.8,1.2+\x/5,\y/5);
           	}
        }
		\foreach \y in {1,2,3} {  
			\foreach \x in {1,2,3} {  
				\filldraw[fill=red] (1.8,\x/5,\y/5+0.6) -- (1.8,\x/5,\y/5+0.8) -- (1.8,\x/5+0.2,\y/5+0.8) -- (1.8,\x/5+1/5,\y/5+0.6) -- (1.8,\x/5,\y/5+0.6);
			}
		}
     	\foreach \y in {1,2,3} {  
     		\foreach \x in {1,2,3} {  
     			\filldraw[fill=yellow] (1.8,0.6+\x/5,\y/5+0.6) -- (1.8,0.6+\x/5,\y/5+0.8) -- (1.8,0.6+\x/5+0.2,\y/5+0.8) -- (1.8,0.6+\x/5+1/5,\y/5+0.6) -- (1.8,0.6+\x/5,\y/5+0.6);
     			}
     		}
     	\foreach \y in {1,2,3} {

     		\foreach \x in {1,2,3} {  
     			\filldraw[fill=magenta] (1.8,1.2+\x/5,\y/5+0.6) -- (1.8,1.2+\x/5,\y/5+0.2+0.6) -- (1.8,1.2+\x/5+0.2,\y/5+0.2+0.6) -- (1.8,1.2+\x/5+1/5,\y/5+0.6) -- (1.8,1.2+\x/5,\y/5+0.6);
      			}
       	}
		\foreach \y in {1,2,3} {  
			\foreach \x in {1,2,3} {  
				\filldraw[fill=cyan] (1.8,\x/5,\y/5+1.2) -- (1.8,\x/5,\y/5+1.4) -- (1.8,\x/5+0.2,\y/5+1.4) -- (1.8,\x/5+1/5,\y/5+1.2) -- (1.8,\x/5,\y/5+1.2);
			}
		}
		\foreach \y in {1,2,3} {  
			\foreach \x in {1,2,3} {  
				\filldraw[fill=green] (1.8,0.6+\x/5,\y/5+1.2) -- (1.8,0.6+\x/5,\y/5+1.4) -- (1.8,0.6+\x/5+0.2,\y/5+1.4) -- (1.8,0.6+\x/5+1/5,\y/5+1.2) -- (1.8,0.6+\x/5,\y/5+1.2);
			}
		}
		\foreach \y in {1,2,3} {  
			\foreach \x in {1,2,3} {  
				\filldraw[fill=yellow] (1.8,1.2+\x/5,\y/5+1.2) -- (1.8,1.2+\x/5,\y/5+1.4) -- (1.8,1.2+\x/5+0.2,\y/5+1.4) -- (1.8,1.2+\x/5+1/5,\y/5+1.2) -- (1.8,1.2+\x/5,\y/5+1.2);
			}
		}
		\foreach \y in {1,2,3} {  
			\foreach \x in {1,2,3} {  
				\filldraw[fill=green] (1.8,\x/5,\y/5+1.8) -- (1.8,\x/5,\y/5+2) -- (1.8,\x/5+0.2,\y/5+2) -- (1.8,\x/5+1/5,\y/5+1.8) -- (1.8,\x/5,\y/5+1.8);
			}
		}
		\foreach \y in {1,2,3} {  
			\foreach \x in {1,2,3} {  
				\filldraw[fill=blue] (1.8,0.6+\x/5,\y/5+1.8) -- (1.8,0.6+\x/5,\y/5+2) -- (1.8,0.6+\x/5+0.2,\y/5+2) -- (1.8,0.6+\x/5+1/5,\y/5+1.8) -- (1.8,0.6+\x/5,\y/5+1.8);
			}
		}
		\foreach \y in {1,2,3} {  
			\foreach \x in {1,2,3} {  
				\filldraw[fill=red] (1.8,1.2+\x/5,\y/5+1.8) -- (1.8,1.2+\x/5,\y/5+0.2+1.8) -- (1.8,1.2+\x/5+0.2,\y/5+0.2+1.8) -- (1.8,1.2+\x/5+1/5,\y/5+1.8) -- (1.8,1.2+\x/5,\y/5+1.8);
			}
		}
	
		\foreach \y in {1,2,3} {  
			\foreach \x in {1,2,3} {  
				\filldraw[fill=red] (1+\x/5,\y/5+1.2,2.6) -- (1+\x/5,\y/5+1.4,2.6) -- (1.2+\x/5,\y/5+1.4,2.6) -- (1.2+\x/5,\y/5+1.2,2.6)  -- (1+\x/5,\y/5+1.2,2.6);
			}
		}
		\foreach \y in {1,2,3} {  
			\foreach \x in {1,2,3} {  
				\filldraw[fill=blue] (1+\x/5,\y/5+0.6,2.6) -- (1+\x/5,\y/5+0.8,2.6) -- (1.2+\x/5,\y/5+0.8,2.6) -- (1.2+\x/5,\y/5+0.6,2.6)  -- (1+\x/5,\y/5+0.6,2.6);
			}
		}
		\foreach \y in {1,2,3} {  
			\foreach \x in {1,2,3} {  
				\filldraw[fill=green] (1+\x/5,\y/5,2.6) -- (1+\x/5,\y/5+0.2,2.6) -- (1.2+\x/5,\y/5+0.2,2.6) -- (1.2+\x/5,\y/5,2.6)  -- (1+\x/5,\y/5,2.6);
			}
		}		
		\foreach \y in {1,2,3} {  
			\foreach \x in {1,2,3} {  
				\filldraw[fill=yellow] (0.4+\x/5,\y/5+1.2,2.6) -- (0.4+\x/5,\y/5+1.4,2.6) -- (0.6+\x/5,\y/5+1.4,2.6) -- (0.6+\x/5,\y/5+1.2,2.6)  -- (0.4+\x/5,\y/5+1.2,2.6);
			}
		}
		\foreach \y in {1,2,3} {  
			\foreach \x in {1,2,3} {  
				\filldraw[fill=red] (0.4+\x/5,\y/5+0.6,2.6) -- (0.4+\x/5,\y/5+0.8,2.6) -- (0.6+\x/5,\y/5+0.8,2.6) -- (0.6+\x/5,\y/5+0.6,2.6)  -- (0.4+\x/5,\y/5+0.6,2.6);
			}
		}
		\foreach \y in {1,2,3} {  
			\foreach \x in {1,2,3} {  
				\filldraw[fill=cyan] (0.4+\x/5,\y/5,2.6) -- (0.4+\x/5,\y/5+0.2,2.6) -- (0.6+\x/5,\y/5+0.2,2.6) -- (0.6+\x/5,\y/5,2.6)  -- (0.4+\x/5,\y/5,2.6);
			}
		}
		\foreach \y in {1,2,3} {  
			\foreach \x in {1,2,3} {  
				\filldraw[fill=green] (-0.2+\x/5,\y/5+1.2,2.6) -- (-0.2+\x/5,\y/5+1.4,2.6) -- (\x/5,\y/5+1.4,2.6) -- (\x/5,\y/5+1.2,2.6)  -- (-0.2+\x/5,\y/5+1.2,2.6);
			}
		}
		\foreach \y in {1,2,3} {  
			\foreach \x in {1,2,3} {  
				\filldraw[fill=blue] (-0.2+\x/5,\y/5+0.6,2.6) -- (-0.2+\x/5,\y/5+0.8,2.6) -- (\x/5,\y/5+0.8,2.6) -- (\x/5,\y/5+0.6,2.6)  -- (-0.2+\x/5,\y/5+0.6,2.6);
			}
		}
		\foreach \y in {1,2,3} {  
			\foreach \x in {1,2,3} {  
				\filldraw[fill=magenta] (-0.2+\x/5,\y/5,2.6) -- (-0.2+\x/5,\y/5+0.2,2.6) -- (\x/5,\y/5+0.2,2.6) -- (\x/5,\y/5,2.6)  -- (-0.2+\x/5,\y/5,2.6);
			}
		}
		
		\foreach \y in {1,2,3} {  
			\foreach \x in {1,2,3} {  
				\filldraw[fill=red] (1.8-\y/5,2,2.8-\x/5) -- (1.8-\y/5,2,2.6-\x/5) -- (2-\y/5,2,2.6-\x/5) -- (2-\y/5,2,2.8-\x/5) -- (2-\y/5,2,2.8-\x/5);
			}
		}		
		\foreach \y in {1,2,3} {  
			\foreach \x in {1,2,3} {  
				\filldraw[fill=yellow] (1.2-\y/5,2,2.8-\x/5) -- (1.2-\y/5,2,2.6-\x/5) -- (1.4-\y/5,2,2.6-\x/5) -- (1.4-\y/5,2,2.8-\x/5) -- (1.4-\y/5,2,2.8-\x/5);
			}
		}		
		\foreach \y in {1,2,3} {  
			\foreach \x in {1,2,3} {  
				\filldraw[fill=green] (0.6-\y/5,2,2.8-\x/5) -- (0.6-\y/5,2,2.6-\x/5) -- (0.8-\y/5,2,2.6-\x/5) -- (0.8-\y/5,2,2.8-\x/5) -- (0.8-\y/5,2,2.8-\x/5);
			}
		}
		\foreach \y in {1,2,3} {  
			\foreach \x in {1,2,3} {  
				\filldraw[fill=yellow] (1.8-\y/5,2,2.2-\x/5) -- (1.8-\y/5,2,2-\x/5) -- (2-\y/5,2,2-\x/5) -- (2-\y/5,2,2.2-\x/5) -- (2-\y/5,2,2.2-\x/5);
			}
		}		
		\foreach \y in {1,2,3} {  
			\foreach \x in {1,2,3} {  
				\filldraw[fill=cyan] (1.2-\y/5,2,2.2-\x/5) -- (1.2-\y/5,2,2-\x/5) -- (1.4-\y/5,2,2-\x/5) -- (1.4-\y/5,2,2.2-\x/5) -- (1.4-\y/5,2,2.2-\x/5);
			}
		}		
		\foreach \y in {1,2,3} {  
			\foreach \x in {1,2,3} {  
				\filldraw[fill=red] (0.6-\y/5,2,2.2-\x/5) -- (0.6-\y/5,2,2-\x/5) -- (0.8-\y/5,2,2-\x/5) -- (0.8-\y/5,2,2.2-\x/5) -- (0.8-\y/5,2,2.2-\x/5);
			}
		}		
		\foreach \y in {1,2,3} {  
			\foreach \x in {1,2,3} {  
				\filldraw[fill=magenta] (1.6-\y/5,2,1.6-\x/5) -- (1.6-\y/5,2,1.4-\x/5) -- (2-\y/5,2,1.4-\x/5) -- (2-\y/5,2,1.6-\x/5) -- (2-\y/5,2,1.6-\x/5);
			}
		}
		\foreach \y in {1,2,3} {  
			\foreach \x in {1,2,3} {  
				\filldraw[fill=blue] (1.2-\y/5,2,1.6-\x/5) -- (1.2-\y/5,2,1.4-\x/5) -- (1.4-\y/5,2,1.4-\x/5) -- (1.4-\y/5,2,1.6-\x/5) -- (1.4-\y/5,2,1.6-\x/5);
			}
		}		
		\foreach \y in {1,2,3} {  
			\foreach \x in {1,2,3} {  
				\filldraw[fill=green] (0.6-\y/5,2,1.6-\x/5) -- (0.6-\y/5,2,1.4-\x/5) -- (0.8-\y/5,2,1.4-\x/5) -- (0.8-\y/5,2,1.6-\x/5) -- (0.8-\y/5,2,1.6-\x/5);
			}
		}		
		\foreach \y in {1,2,3} {  
			\foreach \x in {1,2,3} {  
				\filldraw[fill=blue] (1.6-\y/5,2,1-\x/5) -- (1.6-\y/5,2,0.8-\x/5) -- (2-\y/5,2,0.8-\x/5) -- (2-\y/5,2,1-\x/5) -- (2-\y/5,2,1-\x/5);
			}
		}
		\foreach \y in {1,2,3} {  
			\foreach \x in {1,2,3} {  
				\filldraw[fill=yellow] (1.2-\y/5,2,1-\x/5) -- (1.2-\y/5,2,0.8-\x/5) -- (1.4-\y/5,2,0.8-\x/5) -- (1.4-\y/5,2,1-\x/5) -- (1.4-\y/5,2,1-\x/5);
			}
		}		
		\foreach \y in {1,2,3} {			
			\foreach \x in {1,2,3} {  
				\filldraw[fill=magenta] (0.6-\y/5,2,1-\x/5) -- (0.6-\y/5,2,0.8-\x/5) -- (0.8-\y/5,2,0.8-\x/5) -- (0.8-\y/5,2,1-\x/5) -- (0.8-\y/5,2,1-\x/5);
			}
		}

        \newcommand\Xoffs{0.6}
        \draw[decoration={brace,mirror,raise=-4pt},decorate] (-1,-1.1) -- node[below] {\textcolor{red}{$\Theta_1$},\textcolor{green}{$\Theta_2$},\textcolor{blue}{$\Theta_3$},\dots} (1.8,-1.1);
        \draw[decoration={brace,mirror},decorate] (-0.7,-1.7) -- (1.5,-1.7);        
        
		\draw[->] (-1+\Xoffs,-3) -- (-1+\Xoffs,-2);
		\draw[->] (-1+\Xoffs,-3) -- (0.6+\Xoffs,-3);
		\fill[yellow] (0+\Xoffs,-3) rectangle (0.2+\Xoffs,-2.3);
		\fill[red] (-1+\Xoffs,-3) rectangle (-0.8+\Xoffs,-2.5);
		\fill[green] (-0.8+\Xoffs,-3) rectangle (-0.6+\Xoffs,-2.7);
		\fill[blue] (-0.6+\Xoffs,-3) rectangle (-0.4+\Xoffs,-2.4);
		\fill[magenta] (-0.4+\Xoffs,-3) rectangle (.-0.2+\Xoffs,-2.9);
		\fill[cyan] (-0.2+\Xoffs,-3) rectangle (-0+\Xoffs,-2.1);
		
        \newcommand\Xoffsq{4}
        \newcommand\Yoffsq{0}
		\foreach \x in {1,2,3,4,5,6,7,8,9} {
			\foreach \y in {1,2,3,4,5,6,7,8,9} {
				\filldraw[fill=red] (\Xoffsq+\x/5,\y/5+\Yoffsq,2.6) -- (\Xoffsq+\x/5,\y/5+\Yoffsq+0.2,2.6) -- (\Xoffsq+0.2+\x/5,\y/5+\Yoffsq+0.2,2.6) -- (\Xoffsq+0.2+\x/5,\y/5+\Yoffsq,2.6)  -- (\Xoffsq+\x/5,\y/5+\Yoffsq,2.6);
			}
		}

		\foreach \y in {1,2,3,4,5,6,7,8,9} {
			\filldraw[fill=red] (\Xoffsq+2,\y/5+\Yoffsq,2.6) -- (\Xoffsq+2,\y/5+\Yoffsq+0.2,2.6) -- (\Xoffsq+2,\y/5+\Yoffsq+0.2,2.4) -- (\Xoffsq+2,\y/5+\Yoffsq,2.4)  -- (\Xoffsq+2,\y/5+\Yoffsq,2.6);
			\filldraw[fill=red] (\Xoffsq+2.2-\y/5,\Yoffsq+2,2.6) -- (\Xoffsq+2-\y/5,\Yoffsq+2,2.6) -- (\Xoffsq+2-\y/5,\Yoffsq+2,2.4) -- (\Xoffsq+2.2-\y/5,\Yoffsq+2,2.4)  -- (\Xoffsq+2.2-\y/5,\Yoffsq+2,2.6);
		}
		\foreach \y in {1,2,3,4,5,6,7,8,9} {
			\filldraw[fill=green] (\Xoffsq+2,\y/5+\Yoffsq,2.4) -- (\Xoffsq+2,\y/5+\Yoffsq+0.2,2.4) -- (\Xoffsq+2,\y/5+\Yoffsq+0.2,2.2) -- (\Xoffsq+2,\y/5+\Yoffsq,2.2)  -- (\Xoffsq+2,\y/5+\Yoffsq,2.4);
			\filldraw[fill=green] (\Xoffsq+2.2-\y/5,\Yoffsq+2,2.4) -- (\Xoffsq+2-\y/5,\Yoffsq+2,2.4) -- (\Xoffsq+2-\y/5,\Yoffsq+2,2.2) -- (\Xoffsq+2.2-\y/5,\Yoffsq+2,2.2)  -- (\Xoffsq+2.2-\y/5,\Yoffsq+2,2.4);
		}
		\foreach \y in {1,2,3,4,5,6,7,8,9} {
			\filldraw[fill=blue] (\Xoffsq+2,\y/5+\Yoffsq,2.2) -- (\Xoffsq+2,\y/5+\Yoffsq+0.2,2.2) -- (\Xoffsq+2,\y/5+\Yoffsq+0.2,2) -- (\Xoffsq+2,\y/5+\Yoffsq,2)  -- (\Xoffsq+2,\y/5+\Yoffsq,2.2);
			\filldraw[fill=blue] (\Xoffsq+2.2-\y/5,\Yoffsq+2,2.2) -- (\Xoffsq+2-\y/5,\Yoffsq+2,2.2) -- (\Xoffsq+2-\y/5,\Yoffsq+2,2) -- (\Xoffsq+2.2-\y/5,\Yoffsq+2,2)  -- (\Xoffsq+2.2-\y/5,\Yoffsq+2,2.2);
		}
		\foreach \y in {1,2,3,4,5,6,7,8,9} {
			\filldraw[fill=magenta] (\Xoffsq+2,\y/5+\Yoffsq,2) -- (\Xoffsq+2,\y/5+\Yoffsq+0.2,2) -- (\Xoffsq+2,\y/5+\Yoffsq+0.2,1.8) -- (\Xoffsq+2,\y/5+\Yoffsq,1.8)  -- (\Xoffsq+2,\y/5+\Yoffsq,2);
			\filldraw[fill=magenta] (\Xoffsq+2.2-\y/5,\Yoffsq+2,2) -- (\Xoffsq+2-\y/5,\Yoffsq+2,2) -- (\Xoffsq+2-\y/5,\Yoffsq+2,1.8) -- (\Xoffsq+2.2-\y/5,\Yoffsq+2,1.8)  -- (\Xoffsq+2.2-\y/5,\Yoffsq+2,2);
		}
		\foreach \y in {1,2,3,4,5,6,7,8,9} {
			\filldraw[fill=yellow] (\Xoffsq+2,\y/5+\Yoffsq,1.8) -- (\Xoffsq+2,\y/5+\Yoffsq+0.2,1.8) -- (\Xoffsq+2,\y/5+\Yoffsq+0.2,1.6) -- (\Xoffsq+2,\y/5+\Yoffsq,1.6)  -- (\Xoffsq+2,\y/5+\Yoffsq,1.8);
			\filldraw[fill=yellow] (\Xoffsq+2.2-\y/5,\Yoffsq+2,1.8) -- (\Xoffsq+2-\y/5,\Yoffsq+2,1.8) -- (\Xoffsq+2-\y/5,\Yoffsq+2,1.6) -- (\Xoffsq+2.2-\y/5,\Yoffsq+2,1.6)  -- (\Xoffsq+2.2-\y/5,\Yoffsq+2,1.8);
		}
		\foreach \y in {1,2,3,4,5,6,7,8,9} {
			\filldraw[fill=cyan] (\Xoffsq+2,\y/5+\Yoffsq,1.6) -- (\Xoffsq+2,\y/5+\Yoffsq+0.2,1.6) -- (\Xoffsq+2,\y/5+\Yoffsq+0.2,1.4) -- (\Xoffsq+2,\y/5+\Yoffsq,1.4)  -- (\Xoffsq+2,\y/5+\Yoffsq,1.6);
			\filldraw[fill=cyan] (\Xoffsq+2.2-\y/5,\Yoffsq+2,1.6) -- (\Xoffsq+2-\y/5,\Yoffsq+2,1.6) -- (\Xoffsq+2-\y/5,\Yoffsq+2,1.4) -- (\Xoffsq+2.2-\y/5,\Yoffsq+2,1.4)  -- (\Xoffsq+2.2-\y/5,\Yoffsq+2,1.6);
		}
		
		\foreach \y in {1,2,3,4,5,6,7,8,9} {
			\filldraw[fill=red] (\Xoffsq+2,\y/5+\Yoffsq,2.6-1.2) -- (\Xoffsq+2,\y/5+\Yoffsq+0.2,2.6-1.2) -- (\Xoffsq+2,\y/5+\Yoffsq+0.2,2.4-1.2) -- (\Xoffsq+2,\y/5+\Yoffsq,2.4-1.2)  -- (\Xoffsq+2,\y/5+\Yoffsq,2.6-1.2);
			\filldraw[fill=red] (\Xoffsq+2.2-\y/5,\Yoffsq+2,2.6-1.2) -- (\Xoffsq+2-\y/5,\Yoffsq+2,2.6-1.2) -- (\Xoffsq+2-\y/5,\Yoffsq+2,2.4-1.2) -- (\Xoffsq+2.2-\y/5,\Yoffsq+2,2.4-1.2)  -- (\Xoffsq+2.2-\y/5,\Yoffsq+2,2.6-1.2);
		}
		\foreach \y in {1,2,3,4,5,6,7,8,9} {
			\filldraw[fill=green] (\Xoffsq+2,\y/5+\Yoffsq,2.4-1.2) -- (\Xoffsq+2,\y/5+\Yoffsq+0.2,2.4-1.2) -- (\Xoffsq+2,\y/5+\Yoffsq+0.2,2.2-1.2) -- (\Xoffsq+2,\y/5+\Yoffsq,2.2-1.2)  -- (\Xoffsq+2,\y/5+\Yoffsq,2.4-1.2);
			\filldraw[fill=green] (\Xoffsq+2.2-\y/5,\Yoffsq+2,2.4-1.2) -- (\Xoffsq+2-\y/5,\Yoffsq+2,2.4-1.2) -- (\Xoffsq+2-\y/5,\Yoffsq+2,2.2-1.2) -- (\Xoffsq+2.2-\y/5,\Yoffsq+2,2.2-1.2)  -- (\Xoffsq+2.2-\y/5,\Yoffsq+2,2.4-1.2);
		}
		\foreach \y in {1,2,3,4,5,6,7,8,9} {
			\filldraw[fill=blue] (\Xoffsq+2,\y/5+\Yoffsq,2.2-1.2) -- (\Xoffsq+2,\y/5+\Yoffsq+0.2,2.2-1.2) -- (\Xoffsq+2,\y/5+\Yoffsq+0.2,2-1.2) -- (\Xoffsq+2,\y/5+\Yoffsq,2-1.2)  -- (\Xoffsq+2,\y/5+\Yoffsq,2.2-1.2);
			\filldraw[fill=blue] (\Xoffsq+2.2-\y/5,\Yoffsq+2,2.2-1.2) -- (\Xoffsq+2-\y/5,\Yoffsq+2,2.2-1.2) -- (\Xoffsq+2-\y/5,\Yoffsq+2,2-1.2) -- (\Xoffsq+2.2-\y/5,\Yoffsq+2,2-1.2)  -- (\Xoffsq+2.2-\y/5,\Yoffsq+2,2.2-1.2);
		}
		\foreach \y in {1,2,3,4,5,6,7,8,9} {
			\filldraw[fill=magenta] (\Xoffsq+2,\y/5+\Yoffsq,2-1.2) -- (\Xoffsq+2,\y/5+\Yoffsq+0.2,2-1.2) -- (\Xoffsq+2,\y/5+\Yoffsq+0.2,1.8-1.2) -- (\Xoffsq+2,\y/5+\Yoffsq,1.8-1.2)  -- (\Xoffsq+2,\y/5+\Yoffsq,2-1.2);
			\filldraw[fill=magenta] (\Xoffsq+2.2-\y/5,\Yoffsq+2,2-1.2) -- (\Xoffsq+2-\y/5,\Yoffsq+2,2-1.2) -- (\Xoffsq+2-\y/5,\Yoffsq+2,1.8-1.2) -- (\Xoffsq+2.2-\y/5,\Yoffsq+2,1.8-1.2)  -- (\Xoffsq+2.2-\y/5,\Yoffsq+2,2-1.2);
		}
		\foreach \y in {1,2,3,4,5,6,7,8,9} {
			\filldraw[fill=yellow] (\Xoffsq+2,\y/5+\Yoffsq,1.8-1.2) -- (\Xoffsq+2,\y/5+\Yoffsq+0.2,1.8-1.2) -- (\Xoffsq+2,\y/5+\Yoffsq+0.2,1.6-1.2) -- (\Xoffsq+2,\y/5+\Yoffsq,1.6-1.2)  -- (\Xoffsq+2,\y/5+\Yoffsq,1.8-1.2);
			\filldraw[fill=yellow] (\Xoffsq+2.2-\y/5,\Yoffsq+2,1.8-1.2) -- (\Xoffsq+2-\y/5,\Yoffsq+2,1.8-1.2) -- (\Xoffsq+2-\y/5,\Yoffsq+2,1.6-1.2) -- (\Xoffsq+2.2-\y/5,\Yoffsq+2,1.6-1.2)  -- (\Xoffsq+2.2-\y/5,\Yoffsq+2,1.8-1.2);
		}
		\foreach \y in {1,2,3,4,5,6,7,8,9} {
			\filldraw[fill=cyan] (\Xoffsq+2,\y/5+\Yoffsq,1.6-1.2) -- (\Xoffsq+2,\y/5+\Yoffsq+0.2,1.6-1.2) -- (\Xoffsq+2,\y/5+\Yoffsq+0.2,1.4-1.2) -- (\Xoffsq+2,\y/5+\Yoffsq,1.4-1.2)  -- (\Xoffsq+2,\y/5+\Yoffsq,1.6-1.2);
			\filldraw[fill=cyan] (\Xoffsq+2.2-\y/5,\Yoffsq+2,1.6-1.2) -- (\Xoffsq+2-\y/5,\Yoffsq+2,1.6-1.2) -- (\Xoffsq+2-\y/5,\Yoffsq+2,1.4-1.2) -- (\Xoffsq+2.2-\y/5,\Yoffsq+2,1.4-1.2)  -- (\Xoffsq+2.2-\y/5,\Yoffsq+2,1.6-1.2);
		}

		\draw[decoration={brace,mirror,raise=-4pt},decorate] (\Xoffsq-0.8,-1.1) -- (\Xoffsq+2,-1.1);	

		\draw[->] (-1.5+\Xoffsq,-2.2) -- (-1.5+\Xoffsq,-1.4);
		\draw[->] (-1.5+\Xoffsq,-2.2) -- (-0.3+\Xoffsq,-2.2);
    	\fill[red] (-1.5+\Xoffsq,-2.2) rectangle (-1.35+\Xoffsq,-1.8);
		\fill[red] (-1.35+\Xoffsq,-2.2) rectangle (-1.2+\Xoffsq,-2);
		\fill[red] (-1.2+\Xoffsq,-2.2) rectangle (-1.05+\Xoffsq,-1.7);
		\fill[red] (-1.05+\Xoffsq,-2.2) rectangle (-0.9+\Xoffsq,-1.4);
		\fill[red] (-0.9+\Xoffsq,-2.2) rectangle (-0.75+\Xoffsq,-2.1);
		\fill[red] (-0.75+\Xoffsq,-2.2) rectangle (-0.6+\Xoffsq,-1.7);

		\draw[->] (-1.5+1.4+\Xoffsq,-2.2) -- (-1.5+1.4+\Xoffsq,-1.4);
		\draw[->] (-1.5+1.4+\Xoffsq,-2.2) -- (-0.3+1.4+\Xoffsq,-2.2);
		\fill[green] (-1.5+1.4+\Xoffsq,-2.2) rectangle (-1.35+1.4+\Xoffsq,-1.8);
		\fill[green] (-1.35+1.4+\Xoffsq,-2.2) rectangle (-1.2+1.4+\Xoffsq,-2);
		\fill[green] (-1.2+1.4+\Xoffsq,-2.2) rectangle (-1.05+1.4+\Xoffsq,-1.7);
		\fill[green] (-1.05+1.4+\Xoffsq,-2.2) rectangle (-0.9+1.4+\Xoffsq,-1.4);
		\fill[green] (-0.9+1.4+\Xoffsq,-2.2) rectangle (-0.75+1.4+\Xoffsq,-2.1);
		\fill[green] (-0.75+1.4+\Xoffsq,-2.2) rectangle (-0.6+1.4+\Xoffsq,-1.7);
				
		\draw[->] (-1.5+2.8+\Xoffsq,-2.2) -- (-1.5+2.8+\Xoffsq,-1.4);
		\draw[->] (-1.5+2.8+\Xoffsq,-2.2) -- (-0.3+2.8+\Xoffsq,-2.2);
		\fill[blue] (-1.5+2.8+\Xoffsq,-2.2) rectangle (-1.35+2.8+\Xoffsq,-1.8);
		\fill[blue] (-1.35+2.8+\Xoffsq,-2.2) rectangle (-1.2+2.8+\Xoffsq,-2);
		\fill[blue] (-1.2+2.8+\Xoffsq,-2.2) rectangle (-1.05+2.8+\Xoffsq,-1.7);
		\fill[blue] (-1.05+2.8+\Xoffsq,-2.2) rectangle (-0.9+2.8+\Xoffsq,-1.4);
		\fill[blue] (-0.9+2.8+\Xoffsq,-2.2) rectangle (-0.75+2.8+\Xoffsq,-2.1);
		\fill[blue] (-0.75+2.8+\Xoffsq,-2.2) rectangle (-0.6+2.8+\Xoffsq,-1.7);
		
		\node[] at (\Xoffsq+3,-1.8) {\dots};
		
		\draw[decoration={brace,mirror,raise=-4pt},decorate] (\Xoffsq-1.5,-2.6) -- node[below] {$\Theta$}  (\Xoffsq+2.7,-2.6);	
	\end{tikzpicture}
	
	\end{center}
	\caption{Bag of Systems vs. System of Bags: While a BoS describes the video as a global distribution of spatially and temporally localized systems, an SoB describes the video as a temporally localized but spatially global distribution of features that changes over time according to one single system.}
	\label{fig:bos_sob}
\end{figure}
We refer to such an approach as \emph{System of Bags} (SoB) in reference to the successful Bags of Systems. Unlike Bags of Systems, where dynamic systems on spatially and temporally local scale are computed and "bags" thereof on a global scale are created, we compute histograms on a temporally local but spatially global scale and model their evolution over time by means of a dynamic system. Fig. \ref{fig:bos_sob} visualizes this distinction. The left side of the picture shows the procedure of BoS and related generative models. A video is first divided into spatiotemporal cubes and for each cube, a \emph{word}, e.g. a stack of dynamic system parameters $\Theta_i,i\in\{1,2,3,\dots\}$, is computed. Afterwards, a descriptor distribution over the complete video is estimated as the final representation of the video. The right side of the figure shows the procedure of generating a SoB. Distributions of descriptors are computed on temporally local but a spatially global scale. The temporal progression of these distributions is successively modeled as a dynamic system $\Theta$. Typically, the terms \emph{bag} and \emph{word} imply the usage of a learned codebook. However, this technique can be applied independently of codebooks and thus will be referred to as SoBs even in cases where the histograms were not computed with respect to a codebook.
 
\subsection{Related Work and our Contribution}
The temporal evolution of histograms can not be well described by linear dynamic systems. Instead, we employ the concept of \emph{Kernel Linear Dynamic Systems} (KLDS) which model the observations in a kernel feature space. The parameters describing the KLDS are the representations of the visual processes employed in his work. Using these descriptors in the context of supervised learning requires the definition of a dissimilarity measure. The available literature on KLDSs offers a manageable number of dissimilarity measures that perform insufficiently on the SoB descriptors used in this work, as will be shown in the experimental Section \ref{sec:exp}, which may be due to the fact that they put too much emphasis on the dynamic part of the KLDS parameters, whereas in the setting discussed in this work, static information, i.e. the information not related to the temporal context has a considerable significance. The KLDS itself was introduced in \cite{chan2007classifying} and motivated by the recognition of dynamic textures. As a dissimilarity measure, a kernelized version of the widely adopted Martin distance \cite{de2002subspace} was applied.

Modeling visual processes as SoBs i.e. KLDSs of histograms was employed in \cite{chaudhry2009histograms} for the classification of human actions. The authors modeled videos of human actions as streams of histograms of optical flow (HOOF). As a similarity measure, a Binet-Cauchy Maximum Singular Value kernel was applied. The work was further enhanced in \cite{motiian2013pairwise}, where human interactions were targeted. More generally, bags and histograms as representations of samples of multidimensional time signals were used both for human action \cite{ofli2014sequence} and dynamic scene \cite{feichtenhofer2014bags} recognition, but the temporal order was neglected in both cases. 

The novelty of this work is to explore the framework of SoB in the context of dynamic scene and large-scale dynamic texture recognition and to develop an appropriate dissimilarity measure. To this end, we adopt the framework of the \emph{alignment distance} from \cite{afsari2012group} and develop a kernelized version suitable for the comparison of SoBs. Unlike the mentioned dissimilarity measures for SoBs, impact of the static and dynamic components can be chosen depending on the employed generative image model. Besides, its property of being the square of a metric and its simple definition based on the Frobenius distance allows for the definition of \emph{Fr\'echet means} \cite{frechet1948les}. This is crucial for the classification via the \emph{nearest class center} (NCC) \cite{dubois2015characterization}. A part of this work is dedicated to the computation of abstract means of sets of KLDSs for avoiding the memory burden of \emph{nearest neighbor} (1-NN) classification.

The computation of alignment distances involves modeling the appearance of a visual process as an equivalence class of points on a Stiefel manifold. This is closely related to Grassmann based models. The authors of \cite{harandi2013dictionary} model visual processes as points on kernelized Grassmann manifolds, while our approach employs kernelized Stiefel manifolds in a similar manner. In particular, the authors propose to model the spaces of video frames, or of temporally localized features extracted from them, as sparse codes via points on Grassmann manifolds. Furthermore, a kernel dictionary learning approach for dynamic texture recognition was employed in \cite{quan2016equiangular}.

\subsection{Notation}
Boldfaced uppercase letters, e.g. $\mathbf{A},\mathbf{C}$ denote matrices and boldfaced lowercase letters e.g $\mathbf{v},\mathbf{w}$ denote vectors. Bold italic letters like $\boldsymbol{a}$ or $\boldsymbol{\alpha}$ refer to any members of metric spaces. The identity matrix in $\mathbb{R}^{n\times n}$ is written as $\mathbf{I}_n$ and a vector of ones as $\mathbf{1}_n$. For submatrices, the colon notation of Matlab is adopted, e.g. $\mathbf{V}_{1:n,1:n}$ for the left-upper $n\times n$ square submatrix of $\mathbf{V}$. Singular value and Eigenvalue decompositions are assumed to be sorted in a descending manner.
\section{Systems of Bags}
\subsection{Visual Processes as Streams of Histograms}
Given an ordered set of vectorized samples $\{\mathbf{s}_1\cdots\mathbf{s}_N\}$ of a multidimensional signal in time, let us assume that temporally local dynamics are negligible for the assignment of a class. This assumption can not be kept up when it comes to determining the sense of rotation of a wheel or a windmill from video footage but can be usually assumed to be valid for telling one scene video apart from another. In such cases, it is sensible to convert the signal to a matrix $\begin{bmatrix}\mathbf{y}_1\cdots\mathbf{y}_N\end{bmatrix}\in\mathbb{R}^{p\times N}$ of temporally localized feature vectors that capture the distinguishable characteristics on a spatially global but temporally local scale. 

Classification requires the generalization from one set of signals of one class to other signals of the same class. Since the entities in our model are temporally ordered sets of features, our aim is to learn a generative model that describes how the feature vectors develop over time. For many image classification scenarios, histogram-based feature vectors have proven successful. We do not pose any constraints on the histograms except that their entries are nonnegative and their $\ell_1$-norm is 1. Let $\mathbf{Y}=\begin{bmatrix}\mathbf{y}_1\cdots\mathbf{y}_N\end{bmatrix}\in\mathbb{R}^{p\times N}$ be a sample matrix of histogram vectors that were observed from a visual process over time. A common temporal model - one that is particularly popular in the modeling of dynamic textures \cite{doretto2003dynamic} - is a linear dynamic system (LDS) typically modeled as
\begin{equation}
\label{eq:LDS}
\begin{split}
\mathbf{x}_{t+1}&=\mathbf{A}\mathbf{x}_t+\mathbf{w}_t,\\
\mathbf{y}_{t}&=\boldsymbol{\mu}+\mathbf{C}\mathbf{x}_t+\mathbf{v}_t,
\end{split}
\end{equation}
where $\boldsymbol{\mu}\in\mathbb{R}^p$ is the expected value of the observations $\{\mathbf{y}_1,\mathbf{y}_2,\dots\}$ and $\mathbf{C}\in\mathbb{R}^{p\times n}$ is the \emph{observation matrix} which, together with $\boldsymbol{\mu}$ maps the internal state space vector $\mathbf{x}_t\in\mathbb{R}^n$ to its respective observation $\mathbf{y}\in\mathbb{R}^p$ at time $t$, unperturbed by noise. The observation noise vector $\mathbf{v}_t$ describes the model error. In the state space, the \emph{state transition matrix} $\mathbf{A}\in\mathbb{R}^{n\times n}$ models the expected temporal evolution of the state vector and the term $\mathbf{w}_t$ accounts for the process noise. The noise terms are assumed to be i.i.d. Gaussian. The parameters $\boldsymbol{\mu}$, $\mathbf{A}$ and $\mathbf{C}$ describe the predictable part of the dynamics of the system and are thus a natural choice for a feature representation of it.

\subsection{Kernelized Linear Dynamic Systems}
Sets of histograms can not be well modeled by linear vector spaces due to the intrinsic structure of histogram manifolds \cite{chaudhry2009histograms} and thus a non-linear model is preferred. Let the function $\varphi:\mathbb{R}^p\rightarrow\mathcal{H}$ be a feature space mapping of histograms to a feature space corresponding to an appropriate histogram kernel
\begin{equation}
\kappa:\mathbb{R}^{p\times M}\times\mathbb{R}^{p\times N}\rightarrow\mathbb{R}^{M\times N}.
\end{equation}
In other words, for two histograms $\mathbf{y}_1,\mathbf{y}_2\in\mathbb{R}^p$, the inner product fo their feature space mapping in the Hilbert space $\mathcal{H}$ can be written as
\begin{equation}
\langle \varphi(\mathbf{y}_1),\varphi(\mathbf{y}_2)\rangle_\mathcal{H}=\kappa(\mathbf{y}_1,\mathbf{y}_2).
\end{equation}
We assume that $\kappa$ operates on matrices and returns a matrix of kernel values for each pair of columns of the input matrices. A number of kernels are available for probabilistic models and, in particular, histograms. Among the most popular are the Bhattacharrya kernel, the histogram intersection kernel \cite{barla2003histogram} and the $\chi^2$-kernel. We will restrict ourselves to the $\chi^2$-kernel which is defined as 
\begin{equation}
	\kappa_{\chi^2}(\mathbf{y}_1,\mathbf{y}_2)=\exp\left(-\frac{1}{2}\sum_{i\in\mathcal{S}_1\cup\mathcal{S}_2}\frac{(y_{1,i}-y_{2,i})^2}{y_{1,i}+y_{2,i}}\right)
	\label{eq:chi}
	\end{equation}
	for a pair of histograms $\mathbf{y}_1,\mathbf{y}_2$, where $\mathcal{S}_1,\mathcal{S}_2$ denote the supports of $\mathbf{y}_1$ and $\mathbf{y}_2$, respectively.

Since kernel feature spaces are separable \cite{scholkopf2002learning}, we can think of  linear operators that map from real-valued euclidean vectors to $\mathcal{H}$ as matrices, where by \emph{matrix} a touple of elements in $\mathcal{H}$ is meant. Specifically, a matrix $\mathbf{F}=\begin{bmatrix}
\boldsymbol{f}_1\cdots\boldsymbol{f}_M
\end{bmatrix}\in\mathcal{H}^M$ 
represents the operator
\begin{equation}
\begin{split}
\boldsymbol{F}:\mathbb{R}^{M\times N}&\rightarrow\mathcal{H}^N,\\
\mathbf{D}&\mapsto 
\begin{bmatrix}
\sum_{i=1}^{M}d_{i,1}\boldsymbol{f}_i \cdots \sum_{i=1}^{M}d_{i,N}\boldsymbol{f}_i 
\end{bmatrix}.
\end{split}
\end{equation}
Beyond that, for two matrices $\mathbf{F}\in\mathcal{H}^m,\mathbf{G}\in\mathcal{H}^n$, we define the product
\begin{equation}
\mathbf{F}^\top\mathbf{G}=\begin{bmatrix}
\langle \boldsymbol{f}_1,\boldsymbol{g}_1\rangle_\mathcal{H}&\cdots&\langle\boldsymbol{f}_1,\boldsymbol{g}_n\rangle_\mathcal{H}\\
\vdots & \ddots & \vdots\\
\langle\boldsymbol{f}_m,\boldsymbol{g}_1\rangle_\mathcal{H}&\cdots&\langle\boldsymbol{f}_m,\boldsymbol{g}_n\rangle_\mathcal{H}\\
\end{bmatrix}.
\end{equation}
From this follow the definitions of the respective Frobenius inner product $\mathrm{tr}(\mathbf{F}^\top\mathbf{G})$ for $M=N$ and the Frobenius norm
\begin{equation}
\|\mathbf{F}\|_F=\sqrt{\mathrm{tr}(\mathbf{F}^\top\mathbf{F})}.
\end{equation}

A KLDS is defined as
\begin{equation}
	\label{eq:KLDS}
	\begin{split}
		\mathbf{x}_{t+1}&=\mathbf{A}\mathbf{x}_t+\mathbf{w}_t,\\
		\varphi(\mathbf{y}_{t})&=\boldsymbol{\mu}+\mathbf{C}\mathbf{x}_t+\mathbf{v}_t,
	\end{split}
\end{equation}
Unlike equation \eqref{eq:LDS}, the observation $\mathbf{y}_t$ is not modeled directly, but in terms of its feature space mapping $\varphi(\mathbf{y}_t)\in\mathcal{H}$.  The matrix $\mathbf{C}$, denoted \emph{feature space observer} in the following, is typically not directly available, since $\varphi$ is usually not given and described implicitly via the kernel trick. The same holds for the \emph{feature space bias} $\boldsymbol{\mu}$. However, given a set of observations $\{\mathbf{y}_1,\dots\textbf{y}_N\}$ such that
\begin{equation}
	\{\mathbf{C}_{:,1},\dots,\mathbf{C}_{:,n}\}\cup\{\boldsymbol{\mu}\}\subset\mathrm{span}\left(\{\varphi(\mathbf{y}_1),\dots,\varphi(\textbf{y}_N)\}\right)
\end{equation}
holds, let us define
\begin{equation}
\mathbf{\Phi}=\begin{bmatrix}
\varphi(\mathbf{y}_1)\cdots\varphi(\textbf{y}_N)
\end{bmatrix}\in\mathcal{H}^N.
\end{equation}
Then there exists a coefficient matrix $\boldsymbol{\alpha}\in\mathbb{R}^{N\times n}$ and a coefficient vector $\boldsymbol{\beta}\in\mathbb{R}^{N}$ for which the following relations are valid.
\begin{equation}
	\mathbf{C}=\mathbf{\Phi}\boldsymbol{\alpha},\ \boldsymbol{\mu}=\mathbf{\Phi}\boldsymbol{\beta}.
\end{equation}
Thus, the KLDS \eqref{eq:KLDS} can be equivalently described via $\mathbf{A}$ and $\boldsymbol{\alpha}$, along with $\boldsymbol{\beta}$ and a sample matrix $\mathbf{Y}=\begin{bmatrix}\mathbf{y}_1\cdots\mathbf{y}_N\end{bmatrix}$ as follows.
\begin{equation}
	\label{eq:KLDS_rewritten}
	\begin{split}
		\mathbf{x}_{t+1}&=\mathbf{A}\mathbf{x}_t+\mathbf{w}_t,\\
		\varphi(\mathbf{y}_{t})&=\mathbf{\Phi}(\boldsymbol{\beta}+\boldsymbol{\alpha}\mathbf{x}_t)+\mathbf{v}_t.
	\end{split}
\end{equation}
Given two feature space observers $\mathbf{C}_1,\mathbf{C}_2$ and feature space biases $\boldsymbol{\mu}_1,\boldsymbol{\mu}_2$, described by the sample matrices $\mathbf{Y}_1,\mathbf{Y}_2$ and the coefficient parameters $\boldsymbol{\alpha}_1,\boldsymbol{\alpha}_2,\boldsymbol{\beta}_1,\boldsymbol{\beta}_2$, the relations
\begin{equation}
\begin{split}
\mathbf{C}_i^\top\mathbf{C}_j&=\boldsymbol{\alpha}_i^\top\kappa(\mathbf{Y}_i,\mathbf{Y}_j)\boldsymbol{\alpha}_j,\\
\boldsymbol{\mu}_i^\top\boldsymbol{\mu}_j&=\boldsymbol{\beta}_i^\top\kappa(\mathbf{Y}_i,\mathbf{Y}_j)\boldsymbol{\beta}_j,\ i,j\in\{1,2\}
\end{split}
\end{equation}
can be concluded. For the kernel \eqref{eq:chi}, the set of feature representations of histograms is bounded. This follows directly from the fact that for any histogram, the canonical norm induced by \eqref{eq:chi} is $1$.  We can thus assume, that the underlying system is stable. This is imposed by constraining the spectral norm of the state transition matrix to
\begin{equation}
	\|\mathbf{A}\|_2<1.
\end{equation}

An algorithm based on Kernel PCA to extract $\mathbf{A}$ and $\boldsymbol{\alpha}$ from a set of observations of a system was provided in \cite{chan2007classifying}. Algorithm \ref{alg:llama} incorporates the procedure. The state space dimension $n$ has to be fixed manually. It should be large enough to capture the variation of the data which can be measured by observing the magnitudes of the eigenvalues of the Gram matrix $\mathbf{K}$, and small enough to make the computation of the state transition matrix $\mathbf{A}$ feasible.
\begin{algorithm}
	\KwIn{Data matrix $\mathbf{Y}\in\mathbb R^{p\times N}$, state space dimension $n\in\mathbb{N}$}
	\KwOut{KLDS parameters $\mathbf{A}\in\mathbb{R}^{n\times n},\mathbf{Y}\in\mathbb R^{p\times N},\boldsymbol{\alpha}\in\mathbb{R}^{N\times n},\boldsymbol{\beta}\in\mathbb{R}^N$}
	\BlankLine
	$\mathbf{K}\leftarrow (\mathbf{I-\frac{1}{N}\mathbf{1}_N\mathbf{1}_N^\top})\kappa(\mathbf{Y},\mathbf{Y})(\mathbf{I-\frac{1}{N}\mathbf{1}_N\mathbf{1}_N^\top})$;\\
	$(\mathbf{V},\mathbf{\Lambda})\leftarrow\mathrm{EVD}(\mathbf{K})$;\\
	$\boldsymbol{\alpha}\leftarrow\mathbf{V}_{:,1:n}\Lambda_{1:n,1:n}^{-\frac{1}{2}}$;\\
	$\mathbf{X}\leftarrow \Lambda_{1:n,1:n}^\frac{1}{2}\mathbf{V}_{1:n}^\top$;\\
	$\mathbf{A}\leftarrow \argmin_{\tilde{\mathbf{A}}}\|\mathbf{X}_{:,2:N}-\mathbf{A}\mathbf{X}_{1:N-1}\|$ s.t. $\|\mathbf{A}\|_2<1$;\\
	$\boldsymbol{\beta}\leftarrow\frac{1}{N}\mathbf{1}_N$;
	\BlankLine
	\caption{Extraction of KLDS parameters}
	\label{alg:llama}
\end{algorithm}

\section{Kernelized Alignment Distances}
\subsection{State Space Bases and Invariance}
Given two KLDSs described by  $\Theta_1=\left(\mathbf{A}_1,\mathbf{Y}_1,\boldsymbol{\alpha}_1,\boldsymbol{\beta}_1\right)$ and ${\Theta}_2=\left(\mathbf{A}_2,\mathbf{Y}_2,\boldsymbol{\alpha}_2,\boldsymbol{\beta}_2\right)$, respectively, one of the most elementary machine learning tasks is comparing the two by means of a dissimilarity measure. A natural choice is a linear combination of the squared Frobenius distances of the dynamic parameters:
\begin{equation}
	\label{eq:frobDist}
	\begin{split}
d_F(\Theta_1,\Theta_2))^2
		=&\lambda_A\|\mathbf{A}_1-\mathbf{A}_2\|_F^2+\|\mathbf{C}_1-\mathbf{C}_2\|_F^2\\
		&+\lambda_{\mu}\|\boldsymbol{\mu}_1-\boldsymbol{\mu}_2\|^2
		\end{split}
\end{equation}
Note that \eqref{eq:frobDist} differs from the formulation in \cite{afsari2012group} by the term $\|\boldsymbol{\mu}_1-\boldsymbol{\mu}_2\|^2$ and the missing of the process noise covariance. Since we assume that the temporally local histograms are already quite discriminative for each visual process, it is sensible to consider the feature space bias in the distance measure. By contrast, the process noise is of little importance and will be neglected. The parameters $\lambda_A$ and $\lambda_\mu$ are real and positive. They incorporate the discriminatory power of each aspect of deterministic part of the dynamics. Their choice is always a matter of consideration and depends on the specific problem. When a sufficient number of training samples are available, cross validation can be used for finding the best values. Some general guidelines can be inferred from the roles the KLDS parameters play in the motion equation \eqref{eq:KLDS}. In particular, large values for $\lambda_A$ should be used for scenarios, where the appearance has little discriminatory power, e.g. when similar objects with diverse movements are to be distinguished. A value well above $1$ for $\lambda_\mu$ should be employed, when isolated frames of the videos provide much discriminatory power and a value close to $0$ for the opposite case.

To facilitate computations, we rewrite \eqref{eq:frobDist} in terms of the trace product and split it up as
\begin{equation}
d_F(\Theta_1,\Theta_2)^2=\tau(\Theta_1,\Theta_2)-2\rho(\Theta_1,\Theta_2),
\end{equation}
with
\begin{equation}
\begin{split}
\tau(\Theta_1,\Theta_2)=&\lambda_A(\mathrm{tr}(\mathbf{A}_1^\top\mathbf{A}_1)+\mathrm{tr}(\mathbf{A}_2^\top\mathbf{A}_2))\\
&+\mathrm{tr}(\boldsymbol{\alpha}_1^\top\kappa(\mathbf{Y}_1,\mathbf{Y}_1)\boldsymbol{\alpha}_1)\\
&+\mathrm{tr}(\boldsymbol{\alpha}_2^\top\kappa(\mathbf{Y}_2,\mathbf{Y}_2)\boldsymbol{\alpha}_2)+\lambda_\mu\|\boldsymbol{\mu}_1-\boldsymbol{\mu_2}\|^2
\end{split}
\label{eq:tau}
\end{equation}
and
\begin{equation}
\rho(\Theta_1,\Theta_2)=\lambda_A\mathrm{tr}(\mathbf{A}_1^\top\mathbf{A}_2)+\mathrm{tr}(\boldsymbol{\alpha}_1^\top\kappa(\mathbf{Y}_1,\mathbf{Y}_2)\boldsymbol{\alpha}_2).
\end{equation}
The advantage of \eqref{eq:frobDist} is that the Frobenius distance is well studied and easy to interpret. However, this choice has the drawback that it is ambiguous. To see this, let $\mathbf{P}\in\mathbb{R}^{n\times n}$ be invertible. For an arbitrary parameter touple $\left(\mathbf{A},\mathbf{Y},\boldsymbol{\alpha},\boldsymbol{\beta}\right)$, consider the transformation
\begin{equation}
	\mathbf{P}\cdot\left(\mathbf{A},\mathbf{Y},\boldsymbol{\alpha},\boldsymbol{\beta}\right)=\left(\mathbf{P}^{-1}\mathbf{A}\mathbf{P},\mathbf{Y},\boldsymbol{\alpha}\mathbf{P},\boldsymbol{\beta}\right).
	\label{eq:P_tran}
\end{equation}
Substituting it into \eqref{eq:KLDS_rewritten} and neglecting the noise terms indicates that it describes the same dynamics as $(\mathbf{A},\mathbf{Y},\boldsymbol{\alpha},\boldsymbol{\beta})$, but $d_F\left((\mathbf{A},\mathbf{Y},\boldsymbol{\alpha},\boldsymbol{\beta}),\mathbf{P}\cdot (\mathbf{A},\mathbf{Y},\boldsymbol{\alpha},\boldsymbol{\beta})\right)^2$ does not vanish in general. This is undesirable, since a distance measure should account for ambiguities of particular representations. It is possible to partially accommodate these ambiguities by imposing the constraint that the columns of the feature space  observer $\mathbf{C}$ must be orthogonal, i.e
\begin{equation}
	\mathbf{C}^\top\mathbf{C}= \boldsymbol{\alpha}^\top\kappa(\mathbf{Y},\mathbf{Y})\boldsymbol{\alpha}=\mathbf{I}_n.
\end{equation}
For any representation $\left(\mathbf{A},\mathbf{Y},\boldsymbol{\alpha},\boldsymbol{\beta}\right)$, a change of state space basis $\mathbf{P}$ can be found such that the transformation \eqref{eq:P_tran} satisfies this constraint. Furthermore, this constraint is fulfilled for any representation extracted with Algorithm \ref{alg:llama}. We formalize it by defining the set of valid, stable KLDSs as
\begin{equation}
	\begin{split}
		\mathcal{K}_{n,p,\kappa}=&\{(\mathbf{A},\mathbf{Y},\boldsymbol{\alpha},\boldsymbol{\beta})\\
		&\in\mathbb{R}^{n\times n}\times\mathbb{R}^{p\times N}\times\mathbb{R}^{N\times n}\times\mathbb{R}^N| N\in\mathbb{N},\\
		&\|\mathbf{A}\|_2<1,\boldsymbol{\alpha}^\top\kappa(\mathbf{Y},\mathbf{Y})\boldsymbol{\alpha}=\mathbf{I}_n\}.
	\end{split}
\end{equation}
For any $\Theta\in\mathcal{K}_{n,p,\kappa}$, a transformation $\mathbf{Q}\cdot\Theta$ is a member of $\mathcal{K}_{n,p,\kappa}$ if and only if $\mathbf{Q}$ is orthogonal. Furthermore, for any two descriptors 
$\Theta_1=(\mathbf{A}_1,\mathbf{Y}_1,\boldsymbol{\alpha}_1,\boldsymbol{\beta}_1)\in \mathcal{K}_{n,p,\kappa}$ and $\Theta_2=(\mathbf{A}_2,\mathbf{Y}_2,\boldsymbol{\alpha}_2,\boldsymbol{\beta}_2)\in \mathcal{K}_{n,p,\kappa}$, it can be shown that
\begin{equation}
d_F(\Theta_1,\mathbf{Q}\cdot\Theta_2)^2=d_F(\mathbf{Q}^\top\cdot\Theta_1,\Theta_2)^2\;\ \forall\mathbf{Q}\in O(n)
\label{eq:sym}
\end{equation}
holds and equation \eqref{eq:tau} simplifies to
\begin{equation}
	\begin{split}
		\tau(\Theta_1,\Theta_2)=&\lambda_A(\mathrm{tr}(\mathbf{A}_1^\top\mathbf{A}_1)+\mathrm{tr}(\mathbf{A}_2^\top\mathbf{A}_2))\\
		&+\lambda_\mu\|\boldsymbol{\mu}_1-\boldsymbol{\mu_2}\|^2+2n.
	\end{split}
	\label{eq:tau_smpl}
\end{equation}
\subsection{Alignment Distance for KLDSs}
In the following we define a non-ambiguous dissimilarity measure on KLDSs.
The remaining ambiguity of $d_F^2$ with respect to orthogonal transformations suggests operating on equivalence classes of $\mathcal{K}_{n,p,\kappa}$, rather than itself. Let $\mathcal{OK}_{n,p,\kappa}$ denote the quotient of $\mathcal{K}_{n,p,\kappa}$ induced by the equivalence relation
\begin{equation}
	\{(\Theta_1,\Theta_2) |\ \exists \mathbf{Q}\in O(n)\ \mathrm{s.t.}\ \Theta_1=\mathbf{Q}\cdot\Theta_2\}.
	\label{eq:equi_class}
\end{equation}
A dissimilarity measure on $\mathcal{OK}_{n,p,\kappa}$ is
\begin{equation}
	\begin{split}
		d_\mathcal{O}(\Theta_1,\Theta_2)^2	=&\min_{\mathbf{Q}\in O(n)}d_F(\Theta_1,\mathbf{Q}\cdot\Theta_2)^2\\
		=&\tau(\Theta_1,\Theta_2)-2\max_{\mathbf{Q}\in O(n)}\rho(\Theta_1,\mathbf{Q}\cdot\Theta_2).
	\end{split}
	\label{eq:d_o}
\end{equation}

The square root $d_\mathcal{O}$  of \eqref{eq:d_o} is actually a metric on $\mathcal{OK}_{n,p,\kappa}$. To see the positive definiteness, we first observe that its square $d_\mathcal{O}^2$ is real and non-negative by definition. Neither can it be zero unless there is an orthogonal $\mathbf{Q}$, such that $\Theta_1=\mathbf{Q}\cdot\Theta_2$, since otherwise $2\rho$ would be smaller than $\tau$ in \eqref{eq:d_o}, due to the Cauchy-Schwarz inequality.  But this makes $\Theta_1$ and $\Theta_2$ member of the same equivalence class of \eqref{eq:equi_class}. The symmetry property follows directly form \eqref{eq:sym}:
\begin{equation}
\begin{split}
d_\mathcal{O}(\Theta_1,\Theta_2)&=
\min_{\mathbf{Q}\in O(n)}(\Theta_1,\mathbf{Q}\cdot\Theta_2)\\
&=\min_{\mathbf{Q}\in O(n)}d_F(\mathbf{Q}^\top\cdot\Theta_1,\Theta_2)\\
&=\min_{\mathbf{Q}\in O(n)}d_F(\Theta_2,\mathbf{Q}^\top\cdot\Theta_1)\\
&=d_\mathcal{O}(\Theta_2,\Theta_1).
\end{split}
\end{equation}
As for the triangle inequality, consider the three systems $\Theta_1, \Theta_2$, and $\Theta_3$, with $\mathbf{Q}_{12}$ and $\mathbf{Q}_{23}$ being the orthogonal matrices that solve \eqref{eq:d_o} for the respective pairs. Since $d_F$ is a metric on $\mathcal{K}_{n,p,\kappa}$, we can conclude the  relation
\begin{equation}
\begin{split}
&d_\mathcal{O}(\Theta_1,\Theta_2)+d_\mathcal{O}(\Theta_2,\Theta_3)\\
=&d_F(\mathbf{Q}_{12}^\top\cdot\Theta_1,\Theta_2)+d_F(\Theta_2,\mathbf{Q}_{23}\cdot\Theta_3)\\
\geq&d_F(\mathbf{Q}_{12}^\top\cdot\Theta_1,\mathbf{Q}_{23}\cdot\Theta_3)=d_F(\Theta_1,\mathbf{Q}_{12}\mathbf{Q}_{23}\cdot\Theta_3)\\
\geq&d_\mathcal{O}(\Theta_1,\Theta_3).
\end{split}
\end{equation}
We refer to $d_\mathcal{O}$ as the \emph{alignment metric}, as opposed to its square $d_\mathcal{O}^2$, \emph{alignment distance}. The reason for this distinction is that the metric is helpful for a mathematical interpretation, while for the definition and implementation of the algorithms only the distance is of interest.

\subsection{Jacobi-type Method for Computing the Alignment Distance}
This subsection aims at solving \eqref{eq:d_o}, which boils down to finding an orthogonal maximizer of 
\begin{equation}
\begin{split}
\rho(\Theta_1,&\mathbf{Q}\cdot\Theta_2)=\lambda_A\mathrm{tr}(\mathbf{A}_1^\top\mathbf{Q}^\top\mathbf{A}_2\mathbf{Q})\\
&+\mathrm{tr}(\boldsymbol{\alpha}_1^\top\kappa(\mathbf{Y}_1,\mathbf{Y}_2)\boldsymbol{\alpha}_2\mathbf{Q}).
\end{split}
\label{eq:rho_max}
\end{equation}
The set of $O(n)$ is not connected, but consists of the two connected components $SO(n)$ and $O(n)\setminus SO(n)$. For the sake of simplicity, we treat these two cases separately. The authors of \cite{jimenez2013fast} propose to compute the alignment distance of classical LDSs by writing matrices in $SO(n)$  as products of \emph{Givens} rotations, i.e. as
\begin{equation}
\prod_{p=1}^{n}\prod_{q=p+1}^{n}\mathbf{G}_{p,q}(c,s),
	\label{eq:qprod}
\end{equation}
in which $\mathbf{G}_{p,q}(c,s)$ describes a matrix that performs a rotation in the plane spanned by the coordinate axes $p$ and $q$:
\begin{equation}
	\mathbf{G}_{p,q}(c,s)=
	\small{\begin{bmatrix}   1   & \cdots &    0   & \cdots &    0   & \cdots &    0   \\
		\vdots & \ddots & \vdots &        & \vdots &        & \vdots \\
		0   & \cdots &    c   & \cdots &    s   & \cdots &    0   \\
		\vdots &        & \vdots & \ddots & \vdots &        & \vdots \\
		0   & \cdots &   -s   & \cdots &    c   & \cdots &    0   \\
		\vdots &        & \vdots &        & \vdots & \ddots & \vdots \\
		0   & \cdots &    0   & \cdots &    0   & \cdots &    1
		\end{bmatrix}}.
\end{equation}
The real numbers $s,c$ denote the sine and cosine of the rotation angle and appear in the rows and columns with the indexes $p$ and $q$, respectively. Consequentially, they conform 
\begin{eqnarray}
	s^2+c^2=1\ \mathrm{and}\ -1\leq s \leq 1.
	\label{eq:c_s_cond}
\end{eqnarray}
Now the optimization problem \eqref{eq:d_o} can be approached by maximizing $\rho$ for each $p$ and $q$ individually \cite{jimenez2013fast}:
\begin{algorithm}
	\KwIn{Pair of KLDS descriptors  $\Theta_1=(\mathbf{A}_1,\mathbf{Y}_1,\boldsymbol{\alpha_1},\boldsymbol{\beta}_1),\Theta_2=(\mathbf{A}_2,\mathbf{Y}_2,\boldsymbol{\alpha_2},\boldsymbol{\beta}_2)\in\mathcal{K}_{n,p,\kappa}$, initialization $\mathbf{Q}$, numerical tolerance $\epsilon\geq 0$}
	\KwOut{Minimizer $\hat{\mathbf{Q}}$ and result $d=d_\mathcal{O}(\Theta_1,\Theta_2)^2$ of problem \eqref{eq:d_o}}
	\BlankLine
	$\Theta\leftarrow\Theta_1$;\\
	$\tilde{\Theta}\leftarrow\mathbf{Q}\cdot\Theta_2$;\\
	\While{$\rho(\Theta_1,\mathbf{Q}\cdot\Theta_2)$ not converged}{
		\For{$p\in\{1,\dots,n-1\}$ and $q\in\{p+1,\dots,n\}$}{				  $\hat{s}\leftarrow\argmax_{\substack{s\in [-1,1],\\s^2+c^2=1}}\rho(\Theta,\mathbf{G}_{p,q}(c,s)\cdot\tilde{\Theta})$;\\\vspace*{1mm}
				$\hat{c}\leftarrow \mathrm{sgn}(\hat{c})\sqrt{1-\hat{s}^2}$;\\
				$\tilde{\Theta}\leftarrow \mathbf{G}_{p,q}(\hat{c},\hat{s})\cdot\tilde{\Theta}$;\\
				$\mathbf{Q}\leftarrow \mathbf{Q}\mathbf{G}_{p,q}(\hat{c},\hat{s})$\tcc*{Eq. \eqref{eq:qprod}}		
		}
	}
	$d\leftarrow d_F(\Theta,\tilde{\Theta})^2$;\\
	$\hat{\mathbf{Q}}\leftarrow \mathbf{Q}$;
	\BlankLine
	\caption{Computation of $d_O(\Theta_1,\Theta_2)^2$}
	\label{alg:alpaca}
\end{algorithm}
In each iteration, Algorithm \ref{alg:alpaca} \emph{sweeps} through all possible combinations of two-dimensional rotations and determines the sine minimizing the cost function \eqref{eq:d_o} for each one of them.
It repeats the procedure until a complete sweep does not significantly alter the cost function. The scalar optimization problem
\begin{equation}
\begin{split}
\hat{s}=\argmax_s \rho(\Theta,\mathbf{G}_{p,q}(c,s)\cdot\tilde{\Theta}),\\
\mathrm{s.t.\ } s\in [-1,1],\ s^2+c^2=1,
\end{split}
\end{equation}
can be solved analytically. With $\Theta=(\mathbf{A},\mathbf{Y},\boldsymbol{\alpha},\boldsymbol{\beta})$ and $\tilde{\Theta}=(\tilde{\mathbf{A}},\tilde{\mathbf{Y}},\tilde{\boldsymbol{\alpha}},\tilde{\boldsymbol{\beta}})$, it can be reformulated as
\begin{equation}
	\begin{split}
		\hat{s}=&\argmax_{s\in [-1,1],s^2+c^2=1}\lambda_A\mathrm{tr}(\mathbf{A}\mathbf{G}_{p,q}(c,s)^\top\tilde{\mathbf{A}}\mathbf{G}_{p,q}(c,s))\\
		&+\mathrm{tr}(\boldsymbol{\alpha}^\top\kappa(\mathbf{Y},\tilde{\mathbf{Y}})\tilde{\boldsymbol{\alpha}})\mathbf{G}_{p,q}(c,s)).
	\end{split}
	\label{eq:jacobi_eq}
\end{equation}
It can be observed that \eqref{eq:jacobi_eq} is quadratic in $c$ and $s$. Since constant offsets are irrelevant for maximization, we can write it as
\begin{equation}
\hat{s}=\argmax_{s\in [-1,1],\ c=\pm\sqrt{1-s^2}}k_0 c^2+k_1 s^2+k_2 cs+k_3 c+k_4 s,
\end{equation}
where the factors $k_0,\dots,k_4$ can be determined by writing the trace products as sums of matrix elements \cite{jimenez2013fast} and eliminating  the constant offset. Substituting the second constraint yields
\begin{equation}
\hat{s}=\argmax_{s\in [-1,1]}(k_1-k_0) s^2\pm(k_2 s +k_3) \sqrt{1-s^2}+k_4 s.
\end{equation}
If the optimum is not at the boundaries of $[-1,1]$, it is at a critical point, i.e. at a point with a vanishing derivative. Setting the first derivative to 0 produces
\begin{equation}
\begin{split}
2(k_1-k_0) s\mp\frac{k_2 s^2 +k_3 s}{\sqrt{1-s^2}}\pm k_2\sqrt{1-s^2}+k_4&=0\\
\Rightarrow (2(k_1-k_0)s+k_4)^2(1-s^2)\\
-(k_2 -2k_2s^2-k_3 s)^2&=0
\end{split}
\end{equation}
The reformulation is given by multiplying with $\pm\sqrt{1-s^2}$ and applying the third binomial formula. The resulting formula is the quartic equation
\begin{equation}
\begin{split}
-4((k_1-k_0)^2+k_2^2)s^4-4(k_4(k_1-k_0)+k_2k_3)s^3&\\
+(4(k_1-k_0)^2+4k_2^2-k_4^2-k_3^2)s^2&\\+2(2k_4(k_1-k_0)+k_2k_3)s+(k_4^2-k_2^2)=&0,
\end{split}
\label{eq:polynom}
\end{equation}
for which closed-form solution formulas exist \cite{jimenez2013fast} or Newton type methods can be applied. If the global maximum is not unique, the candidate with the smallest value for $|s|$ must be selected. Once $\hat s$ is found, so is $|\hat{c}|=\sqrt{1-\hat{s}^2}$. The sign of $\hat{c}$ is finally determined by evaluating $\rho(\Theta,\mathbf{G}_{p,q}(c,\hat{s})\cdot\tilde{\Theta})$ for $c=|\hat{c}|$ and $c=-|\hat{c}|$.

While we need the optimizer on $O(n)$, Algorithm \ref{alg:alpaca} searches only one of its connected components. Thus, a necessary condition for the solution of \eqref{eq:d_o} is that the determinant of the determinant of the initialization $\mathbf{Q}$ has the correct sign. Practically, this implies that the algorithm needs to be run at least twice with initializations both in $SO(n)$ and $O(n)\setminus SO(n)$.

\subsection{Convergence Properties}
	The purpose of this subsection is to take a closer a  look at Algorithm \ref{alg:alpaca} from the point of view of convergence and numerical complexity. For a discussion of convergence properties of Jacobi-type methods in general, the reader is referred to \cite{kleinsteuber2004jacobi}. The optimization problem \eqref{eq:d_o} is non-convex due to the restriction of $\mathbf{Q}$ to be in $O(n)$. Although Jacobi methods perform reasonably well in practice, unfortunately we lack a proof of global convergence for this particular method.

Let
\begin{equation}
	f(\mathbf{Q}^{(l)})=\rho(\Theta_1,\mathbf{Q}^{(l)}\cdot\Theta_2)
\end{equation}
be the cost function value  at iteration $l$ of Algorithm \ref{alg:alpaca} for any initialization. The sequence $(f(\mathbf{Q}^{(l)}))_{l\in\mathbb{N}}$ is bounded above and increases monotonically. Hence, it converges to a limit $\hat{f}$. In other words, the algorithm always terminates. The execution time is dictated by the solution of \eqref{eq:polynom}, for which fast and robust methods exist but which has to be performed $(n^2-n)/2$ times per iteration.
 
An essential question is, what the algorithm returns. Since we can not guarantee that it always returns a global maximum, we want at least to make sure that the result is (close to) a critical point of $f$. This is shown in Theorem \ref{thm:accu}. One claim that is made is that $f$ can be further increased at a non-critical point $\mathbf{Q}$ on $O(n)$ by multiplying $\mathbf{Q}$ with a Givens rotation. The claim follows from the observation that for any $\mathbf{Q}\in O(n)$, the set
 \begin{equation}
\left\{\frac{\partial}{\partial s}\mathbf{Q}\mathbf{G}_{p,q}(c,s)\Bigr|_{s=0}\right\}_{p\in{1,\dots,n-1},q\in{p+1,\dots,n}}
 \end{equation}
spans the \emph{tangent space} of $O(n)$ at $\mathbf{Q}$ which consists of products of $\mathbf{Q}$ with skew-symmetric matrices\cite{absil2009optimization}.

\begin{theorem}
		Let $(\mathbf{Q}^{(l)})_{l\in\mathbb{N}}$ be the sequence of orthogonal matrices generated by the outer while loop of Algorithm \ref{alg:alpaca}. The sequence $(\mathbf{Q}^{(l)})_{l\in\mathbb{N}}$ is bounded with respect to the Frobenius norm. If the order of each sweep is chosen randomly, each accumulation point is almost certainly a critical point of the cost function $f$.
		\label{thm:accu}
	\end{theorem}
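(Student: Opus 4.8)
\emph{Proof sketch.}
The plan is to treat the two assertions separately. Boundedness needs no randomness at all: every $\mathbf{Q}^{(l)}$ is the product of the initialization with Givens rotations, so $\mathbf{Q}^{(l)}\in O(n)$ and $\|\mathbf{Q}^{(l)}\|_F^{2}=\mathrm{tr}\!\big((\mathbf{Q}^{(l)})^\top\mathbf{Q}^{(l)}\big)=n$ for all $l$; in particular the sequence stays in a single (compact) connected component of $O(n)$ and therefore has accumulation points. For the convergence claim I would first turn ``critical point'' into a collection of one-dimensional conditions. In the form \eqref{eq:rho_max}, $f$ is a quadratic — hence $C^{\infty}$ — polynomial in the entries of $\mathbf{Q}$, and for each index pair $p<q$ I define the \emph{per-plane gain}
\[
g_{p,q}(\mathbf{Q})=\max_{\substack{s\in[-1,1],\ c=\pm\sqrt{1-s^{2}}}} f\big(\mathbf{Q}\,\mathbf{G}_{p,q}(c,s)\big)-f(\mathbf{Q}),
\]
which is continuous, nonnegative (put $s=0,c=1$), and is precisely the increase of $f$ produced by one inner step of Algorithm~\ref{alg:alpaca} in the plane $(p,q)$. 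If $g_{p,q}(\mathbf{Q})=0$, then $s=0$ maximizes the smooth curve $s\mapsto f\big(\mathbf{Q}\,\mathbf{G}_{p,q}(\sqrt{1-s^{2}},s)\big)$ on $[-1,1]$, so its derivative at the interior point $s=0$ vanishes; since $\left.\tfrac{\partial}{\partial s}\mathbf{Q}\,\mathbf{G}_{p,q}(c,s)\right|_{s=0}$ equals $\mathbf{Q}$ times a skew-symmetric matrix and these vectors span $T_{\mathbf{Q}}O(n)$ (the observation recalled just before the theorem), vanishing of $g_{p,q}(\mathbf{Q})$ for \emph{every} $p<q$ forces the orthogonal projection of $\nabla f(\mathbf{Q})$ onto $T_{\mathbf{Q}}O(n)$ (i.e.\ the Riemannian gradient) to be zero, i.e.\ makes $\mathbf{Q}$ a critical point. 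Hence it is enough to show that, almost surely, $g_{p,q}(\mathbf{Q}^{\ast})=0$ for every accumulation point $\mathbf{Q}^{\ast}$ and every $p<q$.

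The mechanism is telescoping of the cost combined with a covering argument driven by the random orders. As already observed, $(f(\mathbf{Q}^{(l)}))_{l}$ is nondecreasing and bounded above, hence convergent, so the per-sweep increments $\Delta_{l}:=f(\mathbf{Q}^{(l)})-f(\mathbf{Q}^{(l-1)})$ are nonnegative and summable, whence $\Delta_{l}\to 0$. The key inequality is that \emph{if} the order drawn for sweep $l$ happens to begin with $(p,q)$, then its first inner step alone realizes the gain $g_{p,q}(\mathbf{Q}^{(l-1)})$ and later steps do not decrease $f$, so $\Delta_{l}\ge g_{p,q}(\mathbf{Q}^{(l-1)})$. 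Now fix $p<q$ and a rational $\eta>0$ and suppose, towards a contradiction, that with positive probability some accumulation point $\mathbf{Q}^{\ast}$ has $g_{p,q}(\mathbf{Q}^{\ast})\ge 2\eta$. Continuity of $g_{p,q}$ then gives a neighborhood of $\mathbf{Q}^{\ast}$ on which $g_{p,q}>\eta$, and since $\mathbf{Q}^{(l)}$ revisits every neighborhood of $\mathbf{Q}^{\ast}$ infinitely often, the random set $S=\{l:g_{p,q}(\mathbf{Q}^{(l)})>\eta\}$ is infinite on that event. Membership $l\in S$ is measurable with respect to $\mathcal{F}_{l}:=\sigma(\text{orders of sweeps }1,\dots,l)$, while the order of sweep $l+1$ is independent of $\mathcal{F}_{l}$ and begins with $(p,q)$ with probability $1/m$, $m=(n^{2}-n)/2$. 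Therefore, on $\{|S|=\infty\}$, $\sum_{l\in S}P(\text{sweep }l{+}1\text{ begins with }(p,q)\mid\mathcal{F}_{l})=\sum_{l\in S}\tfrac1m=\infty$, and the conditional (L\'evy) form of the second Borel--Cantelli lemma produces, almost surely on that event, infinitely many $l\in S$ for which sweep $l+1$ begins with $(p,q)$; for each of them $\Delta_{l+1}\ge g_{p,q}(\mathbf{Q}^{(l)})>\eta$, contradicting $\Delta_{l}\to 0$. Hence $P(|S|=\infty)=0$, so the event in question is null; a union over the countably many rational $\eta>0$ and the finitely many pairs $p<q$ completes the reduction and hence the proof.

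The step I expect to be the main obstacle is exactly this probabilistic coupling. The tempting but wrong move is to extract a subsequence along which $\mathbf{Q}^{(l)}\to\mathbf{Q}^{\ast}$ and then apply the ordinary second Borel--Cantelli lemma within it to find a sweep starting with $(p,q)$; that subsequence is itself a function of the random orders, so the independence needed by the classical lemma is destroyed. The remedy is to never condition on the subsequence: phrase everything through the filtration $(\mathcal{F}_{l})$, note that ``$\mathbf{Q}^{(l)}$ is currently in a fixed neighborhood of $\mathbf{Q}^{\ast}$'' is $\mathcal{F}_{l}$-measurable whereas ``the next sweep begins with $(p,q)$'' is $\mathcal{F}_{l}$-independent with probability $1/m$, and invoke the conditional Borel--Cantelli lemma, as above. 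Two smaller points still need a line each: that $f$ in the form \eqref{eq:rho_max} is indeed $C^{1}$ with gradient obtained by the usual trace calculus (routine), and that working inside one connected component of $O(n)$, as Algorithm~\ref{alg:alpaca} does, is harmless, since that component is compact and has the same tangent spaces, so ``critical point of $f$ on the component'' and ``critical point of $f$ on $O(n)$'' coincide.
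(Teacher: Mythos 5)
Your proof is correct and rests on the same core mechanism as the paper's: monotonicity and boundedness of the cost values, the remark (stated just before the theorem) that the Givens curves span the tangent space of $O(n)$, so a non-critical point admits a strictly improving plane rotation; continuity to transfer that improvement to nearby iterates; and the random choice of the first pair of each sweep to guarantee that the improvement is eventually realized, yielding a contradiction. The differences are in bookkeeping and rigor. The paper derives its contradiction from the fact that the limit $\hat f$ is an upper bound for the sequence $f(\mathbf{Q}^{(l)})$, whereas you contradict the summability consequence $\Delta_l\to 0$ of the same monotone convergence; these are essentially equivalent. More substantially, the paper's proof fixes a convergent subsequence and simply asserts that the algorithm ``almost certainly will eventually choose $p,q$'' at a sweep starting from a far-enough member of that subsequence --- which is precisely the dependence issue you flag, since the subsequence is itself a function of the realized random orders. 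Your fix --- measuring ``the current iterate has per-plane gain above $\eta$'' with respect to the filtration of past sweep orders, invoking the conditional (L\'evy) Borel--Cantelli lemma, and taking a countable union over index pairs and rational thresholds to absorb the randomness of the accumulation point --- closes this gap cleanly, and the per-plane gain functions $g_{p,q}$ give a tidy equivalence between criticality and the vanishing of all one-plane improvements. So your argument is best viewed as a rigorous refinement of the paper's proof rather than a different route; what it buys is a fully justified probabilistic step where the paper argues informally.
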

	\begin{proof}
		Since $O(n)$ is bounded w.r.t the Frobenius norm, so is $(\mathbf{Q}^{(l)})_{l\in\mathbb{N}}$.
		
		Let $\hat{f}$ be the limit of the cost function value sequence $(f(\mathbf{Q}^{(l)}))_{l\in\mathbb{N}}$ and $o(l)$ the monotonically increasing index mapping belonging to a convergent subsequence of $(\mathbf{Q}^{(l)})_{l\in\mathbb{N}}$. The sequence $(\mathbf{Q}^{(o(l))})_{l\in\mathbb{N}}$ converges to a limit point $\hat{\mathbf{Q}}\in{O(n)}$ with $f(\hat{\mathbf{Q}})=\hat{f}$. Assume that $\hat{\mathbf{Q}}$ is not a critical point of $f$. This implies, that we can find a Givens rotation $\mathbf{G}_{p,q}(c,s)$, such that
			\begin{equation}
			f(\mathbf{G}_{p,q}(c,s)\hat{\mathbf{Q}})>f(\hat{\mathbf{Q}})=\hat{f}.
			\end{equation}
			Since the sequence $(\|\mathbf{Q}^{(o(l))}-\hat{\mathbf{Q}}\|_F)_{l\in\mathbb{N}}$ converges to 0, so does the sequence\newline $(\|\mathbf{G}_{p,q}(c,s)\mathbf{Q}^{(o(l))}-\mathbf{G}_{p,q}(c,s)\hat{\mathbf{Q}}\|_F)_{l\in\mathbb{N}}$, because the Frobenius norm is invariant under orthogonal transformations. The mapping $f$ is continuuous w.r.t. the Frobenius norm and thus there is an index $\eta \in \mathbb{N}$ such that
			\begin{equation}
			f(\mathbf{G}_{p,q}(c,s)\mathbf{Q}^{(o(l))})>\hat{f}
			\end{equation}
			holds for all $l\geq \eta$. However, at each iteration the algorithm starts the sweep by performing a Givens rotation. If the affected indexes for this first Givens rotation are chosen randomly, the algorithm almost certainly will eventually choose $p,q$ and thus yield the point $\mathbf{G}_{p,q}(\hat{c},\hat{s})\mathbf{Q}^{(o(\hat{l}))}$ with $\hat{l}\geq\eta$, where $\hat{c}$ and $\hat{s}$ are chosen such that $f$ is maximized, at the beginning of a sweep. For such a point, the inequality
			\begin{equation}
			\begin{split}
			f(\mathbf{G}_{p,q}(\hat{c},\hat{s})\mathbf{Q}^{(o(\hat{l}))})\geq f(\mathbf{G}_{p,q}(c,s)\mathbf{Q}^{(o(\hat{l}))})>\hat{f}
			\end{split}
			\end{equation}
			holds. This is not possible, because $f(\mathbf{Q}^{(o(\hat{l})+1)})\geq f(\mathbf{G}_{p,q}(\hat{c},\hat{s})\mathbf{Q}^{(o(\hat{l}))})$ and $\hat{f}$ is an upper bound for any element of $(f(\mathbf{Q}^{(l)}))_{l\in\mathbb{N}}$.
		
	\end{proof}

\subsection{Fr{\'e}chet Means of sets of KLDSs}
\label{sub:avg}
\begin{figure}
	\begin{center}
		\begin{tikzpicture}
		\filldraw[black] (1.16,-0.22) circle (2pt);
		\filldraw[black] (-0.40,0.36) circle (2pt);
		\filldraw[black] (-1.03,0.57) circle (2pt);
		\filldraw[black] (0.73,-0.85) circle (2pt);
		\filldraw[black] (-0.1,-0.5) circle (2pt);
		\filldraw[black] (0.1,0.5) circle (2pt);
		\draw[color=gray] (-3,-1) -- (0,0);
		\draw[color=gray,dashed] (-3,-1) -- (-0.46,0.14);
		\draw[color=gray,dotted] (0,0) -- (-0.46,0.14);
		\filldraw[blue] (-3,-1) circle (2pt);
		\filldraw[black] (-0.46,0.14) circle (2pt);
		\filldraw[red] (0,0) circle (2pt);
		\end{tikzpicture}
	\end{center}
	\caption{Motivation for using Fr\'echet means: For a set of reference points (black), the representative point (red) should be chosen such that it its distance to each one of them (dotted) is small. Then, the distance of a test point (blue) to the representative point (continuous) differs little from its distance to any of the reference points represented by the representative point (dashed).}
	\label{fig:triag_interpretation}
\end{figure}
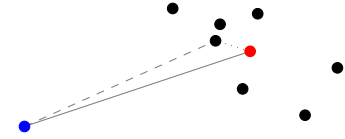

Due to the metric property, the alignment distance allows for the employment Fr\'echet means on subsets of $\mathcal{K}_{n,p,\kappa}$  \cite{afsari2012group}. Consider a finite subset $\{\boldsymbol{x}_1,\dots,\boldsymbol{x}_K\}$ of a space $\mathcal{X}$ equipped with a metric $d(\cdot,\cdot)$. The Fr\'echet mean \cite{frechet1948les} of $\{\boldsymbol{x}_1,\dots,\boldsymbol{x}_K\}$ is the minimizer
\begin{equation}
\bar{\boldsymbol{x}}=\argmin_{\boldsymbol{x}\in\mathcal{X}}\frac{1}{K}\sum_{i=1}^{K}d(\boldsymbol{x},\boldsymbol{x}_i)^2.
\end{equation}
This may look like an abstract concept at first, but it is a natural way to choose a representative point out of a set of candidates. We want this representative point not to be far away of any of the reference points, which is why the squared sum of metrics to all these reference points is minimized. By doing so, we make sure through the triangle inequality, that the metric from a test sample to the representative point differs as little as possible from the metric from a test sample to any reference point. Fig. \ref{fig:triag_interpretation} visualizes this intuition: if we consider a reference point $\boldsymbol{x}_{\text{ref}}$, a testing point $\boldsymbol{x}_{\text{test}}$ and a representative point $\boldsymbol{x}_{\text{rep}}$, the triangle inequality yields
	\begin{equation}
	\begin{split}
&d(\boldsymbol{x}_{\text{ref}},\boldsymbol{x}_\text{rep})-d(\boldsymbol{x}_{\text{rep}},\boldsymbol{x}_\text{test})\\
	\leq&  d(\boldsymbol{x}_{\text{ref}},\boldsymbol{x}_\text{test})
	\leq  d(\boldsymbol{x}_{\text{ref}},\boldsymbol{x}_\text{rep})+d(\boldsymbol{x}_{\text{rep}},\boldsymbol{x}_\text{test}).
	\end{split}
	\end{equation}	
	A small value for $d(\boldsymbol{x}_{\text{rep}},\boldsymbol{x}_\text{ref})$ thus makes sure that $\boldsymbol{x}_{\text{rep}}$ is a sensible approximation of $\boldsymbol{x}_\text{ref}$ for proximity based classification tasks.

Consider the  finite set $\{\Theta_i=(\mathbf{A}_i,\mathbf{Y}_i,\boldsymbol{\alpha}_i,\boldsymbol{\beta}_i)\}_{i\in\{1,\dots,K\}}\subset\mathcal{K}_{n,p,\kappa}$. The Fr\'echet mean of $\{\Theta_i\}_{i\in\{1,\dots,K\}}$ is the minimizer $\bar{\Theta}=(\bar{\mathbf{A}}_i,\bar{\mathbf{Y}}_i,\bar{\boldsymbol{\alpha}}_i,\bar{\boldsymbol{\beta}}_i)\in\mathcal{K}_{n,p,\kappa}$ of the average of alignment distances to the $K$ KLDSs:
\begin{equation}
	\begin{split}
		\bar{\Theta}=&\argmin_{\Theta\in\mathcal{K}_{n,p,\kappa}}\frac{1}{K}\sum_{i=1}^{K}d_\mathcal{O}(\Theta,\Theta_i)^2\\
		=& \argmin_{\Theta\in\mathcal{K}_{n,p,\kappa}}\sum_{i=1}^{K}\min_{\mathbf{Q}\in O(n)} d_F(\Theta,\mathbf{Q}\cdot\Theta_i)^2.
	\end{split}
	\label{eq:klds_avg}
\end{equation}
The minimization problem \eqref{eq:klds_avg} can be approached iteratively, until the cost function
\begin{equation}
g(\bar{\Theta})=\frac{1}{K}\sum_{i=1}^{K}d_\mathcal{O}(\bar{\Theta},\Theta_i)^2
\end{equation}
can not be further reduced.

Assume that at a given iteration $l$, an approximate solution $\bar{\Theta}^{(l)}$ was determined and let $\mathbf{Q}_i^{(l)}$ be the minimizer of $d_F(\bar{\Theta}^{(l)},\mathbf{Q}\cdot\Theta_i)^2$, i.e. the maximizer of $\rho(\bar{\Theta}^{(l)},\mathbf{Q}\cdot\Theta_i)^2$, for each $i$. Substituting this into \eqref{eq:klds_avg} yields
\begin{equation}
	\begin{split}
		\bar{\Theta}^{(l+1)}=&\argmin_{\Theta\in\mathcal{K}_{n,p,\kappa}}\sum_{i=1}^{K}  d_F(\Theta,\mathbf{Q}_i^{(l)}\cdot\Theta_i)^2\\
		=&\argmin_{\substack{\mathbf{A},\mathbf{Y},\boldsymbol{\alpha},\boldsymbol{\beta}\\\in\mathcal{K}_{n,p,\kappa}}}\lambda_A\sum_{i=1}^{K}\mathrm{tr}(\mathbf{A}^\top(\mathbf{A}-2\mathbf{Q}_i^{(l)\top}\mathbf{A}_i\mathbf{Q}_i^{(l)}))\\
		&-2\sum_{i=1}^{K}\mathrm{tr}(\boldsymbol{\alpha}^\top\kappa(\mathbf{Y},\mathbf{Y}_i)\boldsymbol{\alpha}_i\mathbf{Q}_i^{(l)})\\
		&+\lambda_\mu\sum_{i=1}^{K}(\boldsymbol{\beta}^\top\kappa(\mathbf{Y},\mathbf{Y})\boldsymbol{\beta}-2\boldsymbol{\beta}^\top\kappa(\mathbf{Y},\mathbf{Y}_i)\boldsymbol{\beta}_i).
	\end{split} 
	\label{eq:fopt}
\end{equation}
This minimization problem can be solved separately for $\mathbf{A}$, and the rest of the parameters. With this in mind, let us split up the problem. For $\mathbf{A}$, this yields
\begin{equation}
\bar{\mathbf{A}}^{(l+1)}
=\argmin_{\substack{\mathbf{A}\in\mathbb{R}^{n\times n},\\\|\mathbf{A}\|_2<1}}\mathrm{tr}(\mathbf{A}^\top(K\mathbf{A}-2\sum_{i=1}^{K}\mathbf{Q}_i^{(l)\top}\mathbf{A}_i\mathbf{Q}_i^{(l)})).\label{eq:tropt1}
\end{equation}

For the remaining parameters, note that both the feature space bias as well as the feature space observer can be represented as a linear combination of the feature space mappings of all the involved samples. We could thus choose  $\mathbf{\bar{\mathbf{Y}}^{(l)}}$ to be fixed for all iterations $l$ as
\begin{equation}
\mathbf{Y}^*=\begin{bmatrix}\mathbf{Y}_1\cdots\mathbf{Y}_K\end{bmatrix}\in\mathbb{R}^{p\times N^*}
\label{eq:Y_avg}
\end{equation}
to make sure that the feature space bias and the feature space observer are determined exactly.
However, in practice a sample matrix of size $N^*=\sum_{i=1}^{K}N_i$ can become quickly not handleable. Since the feature space bias and the feature space observer typically operate on a much smaller dimension, it is reasonable to assume that fewer samples are needed to model them, i.e. a sample matrix with a considerably lower number of columns $\bar{N}$ can be employed. The choice of $\bar{\mathbf{Y}}\in\mathbb{R}^{p\times\bar{N}}$ can be made heuristically, following insights of Nystr\"om interpolation of kernel matrices \cite{drineas2005nystrom}. It would go beyond the scope of this work to review different strategies for the choice of $\bar{\mathbf{Y}}\in\mathbb{R}^{p\times\bar{N}}$. In this work, we employ the k-means approach proposed in \cite{zhang2008improved} and assume that $\kappa(\bar{\mathbf{Y}},\bar{\mathbf{Y}})$ has full rank.

Assuming that $\bar{\mathbf{Y}}\in\mathbb{R}^{p\times\bar{N}}$ is determined, let us fix
\begin{equation}
\boldsymbol{a}^{(l)}=\begin{bmatrix}\mathbf{Q}^{(l)\top}_1\boldsymbol{\alpha}_1^\top\dots\mathbf{Q}^{(l)\top}_K\boldsymbol{\alpha}_K^\top\end{bmatrix}^\top.
\label{eq:a_alpha}
\end{equation}
for each iteration $l$ and
\begin{equation}
\boldsymbol{b}=\begin{bmatrix}
	\boldsymbol{\beta}_1^\top \dots \boldsymbol{\beta}_K^\top
\end{bmatrix}^\top
\label{eq:b_beta}
\end{equation}
for all iterations. This leads to the formulations
\begin{eqnarray}
\bar{\boldsymbol{\alpha}}^{(l+1)}
&=&\argmax_{\substack{\boldsymbol{\alpha}\in\mathbb{R}^{\bar{N}\times n},\\
		\boldsymbol{\alpha}^\top\kappa(\bar{\mathbf{Y}},\bar{\mathbf{Y}})\boldsymbol{\alpha}=\mathbf{I}_n }}\mathrm{tr}(\boldsymbol{\alpha}^\top\kappa(\bar{\mathbf{Y}},\mathbf{Y}^*)\boldsymbol{a}^{(l)}),\label{eq:tropt3}\\
\bar{\boldsymbol{\beta}}^{(l+1)}
&=&\argmin_{\boldsymbol{\beta}\in\mathbb{R}^{\bar{N}}} \boldsymbol{\beta}^\top(K\kappa(\bar{\mathbf{Y}},\bar{\mathbf{Y}})\boldsymbol{\beta}-2\kappa(\bar{\mathbf{Y}},\mathbf{Y}^*)\boldsymbol{b}).\;\;\;\;\; \label{eq:tropt4}
\end{eqnarray}
The Equation \eqref{eq:tropt4} is an unconstrained quadratic minimization problem with the analytical solution
\begin{equation}
\bar{\boldsymbol{\beta}}=\frac{1}{K}\kappa(\bar{\mathbf{Y}},\bar{\mathbf{Y}})^{-1}\kappa(\bar{\mathbf{Y}},\mathbf{Y}^*)\boldsymbol{b}
\label{eq:beta_avg}
\end{equation}
for all iterations. Additionally, the multiplication from left or right with orthogonal matrices does not affect the spectral norm $\|\cdot\|_2$ of a matrix. Together with the triangle inequality, this unfolds that the solution of \eqref{eq:tropt1}, yielded by the euclidean average, 
\begin{equation}
	\bar{\mathbf{A}}^{(l)}=\frac{1}{K}\sum_{i=1}^{K}\mathbf{Q}_i^{(l)\top}\mathbf{A}_i\mathbf{Q}_i^{(l)},
	\label{eq:a_avg}
\end{equation}
is unaffected by the constraint $\|\mathbf{A}\|_2<1$. For the solution of \eqref{eq:tropt3}, the EVD of the symmetric kernel matrix is written as
\begin{equation}
	\kappa(\bar{\mathbf{Y}},\bar{\mathbf{Y}})=\mathbf{V}\mathbf{\Lambda} \mathbf{V}^\top.
	\label{eq:K_evd}
\end{equation}
From the constraint $\boldsymbol{\alpha}^\top\kappa(\bar{\mathbf{Y}},\bar{\mathbf{Y}})\boldsymbol{\alpha}=\mathbf{I}_n$, we can conclude that the solution must be of the form $\boldsymbol{\alpha}=\mathbf{V}\mathbf{\Lambda}^{-\frac{1}{2}}\mathbf{Z}'$, where  $\mathbf{Z}'^\top\mathbf{Z}'=\mathbf{I}_n$. Substituting this into \eqref{eq:tropt3} reduces the problem to finding
\begin{equation}
\mathbf{Z}'=\argmax_{\mathbf{Z}\in \mathbb{R}^{\bar{N}\times n},\ \mathbf{Z}^\top\mathbf{Z}=\mathbf{I}_n}\mathrm{tr}(\mathbf{Z}^\top\mathbf{\Lambda}^{-\frac{1}{2}}\mathbf{V}^\top\kappa(\bar{\mathbf{Y}},\bar{\mathbf{Y}})\boldsymbol{a}^{(l)}),
\label{eq:trpr}
\end{equation}
which can be solved as follows. Let the singular value decomposition (SVD) of the known factor of the trace product in \eqref{eq:trpr} be given by
\begin{equation}
	\mathbf{\Lambda}^{-\frac{1}{2}}\mathbf{V}^\top\kappa(\bar{\mathbf{Y}},\mathbf{Y}^*)\boldsymbol{a}^{(l)}=\mathbf{U}'\mathbf{\Sigma}'\mathbf{V}'^\top.
	\label{eq:TILDE_svd}
\end{equation}
Substituting it into \eqref{eq:trpr} yields $\mathrm{tr}(\mathbf{Z}^\top\mathbf{U}'\mathbf{\Sigma}'\mathbf{V}'^\top)$ which is maximized by  $\mathbf{Z}'=\mathbf{U}'{\mathbf{V}'}_{:,1:n}^\top$. Finally, we arrive at
\begin{equation}
	\begin{split}
		\bar{\boldsymbol{\alpha}}^{(l)}=&\mathbf{V}\mathbf{\Lambda}^{-\frac{1}{2}}\mathbf{Z}'=\mathbf{V}\mathbf{\Lambda}^{-\frac{1}{2}}\mathbf{U}'{\mathbf{V}'}_{:,1:n}^\top.
	\end{split}
	\label{eq:projected_alpha}
\end{equation}
Algorithm \ref{alg:vicuna} summarizes the described procedure.
\begin{algorithm}
	\SetSideCommentRight
	\KwIn{Set of KLDS descriptors $\{\Theta_i=(\mathbf{A}_i,\mathbf{Y}_i,\boldsymbol{\alpha}_i,\boldsymbol{\beta}_i)\}_{i\in\{1,\dots,K\}}\subset\mathcal{K}_{n,p,\kappa}$, size of sample matrix $\bar{N}$}
	\KwOut{Average KLDS $\bar{\Theta}$}
	$\mathbf{Y}^*\leftarrow\begin{bmatrix}\mathbf{Y}_1\cdots\mathbf{Y}_K\end{bmatrix}$\tcc*{Eq. \eqref{eq:Y_avg}}
	$\bar{\mathbf{Y}}\leftarrow\mathrm{k\mathrm{-}means}(\mathbf{Y}^*,\bar{N})$;\\
	$\boldsymbol{b}\leftarrow\begin{bmatrix}\boldsymbol{\beta}_1^\top\cdots\boldsymbol{\beta}_K^\top\end{bmatrix}^\top$\tcc*{Eq. \eqref{eq:b_beta}}
	$\bar{\boldsymbol{\beta}}\leftarrow\frac{1}{K}\kappa(\bar{\mathbf{Y}},\bar{\mathbf{Y}})^{-1}\kappa(\bar{\mathbf{Y}},\mathbf{Y}^*)\boldsymbol{b}$\tcc*{Eq. \eqref{eq:beta_avg}}
	
	\For{$i=1$ \KwTo $K$}{
		$\mathbf{Q}_i\leftarrow\mathbf{I}_n$;
	}
	\While{$g(\bar{\Theta})$ not converged}{
		$\bar{\mathbf{A}}\leftarrow\frac{1}{N}\sum_{i}^{K}\mathbf{Q}_i^\top\mathbf{A}_i\mathbf{Q}_i$\tcc*{Eq. \eqref{eq:a_avg}}
		$\boldsymbol{a}\leftarrow\begin{bmatrix}\mathbf{Q}_1^{\top}\boldsymbol{\alpha}_1^\top \cdots  \mathbf{Q}_K^{\top}\boldsymbol{\alpha}_K^\top\end{bmatrix}^\top$\tcc*{Eq. \eqref{eq:a_alpha}}
		$\mathbf{V},\mathbf{\Lambda}\leftarrow \mathrm{EVD}(\boldsymbol a^\top\kappa(\bar{\mathbf{Y}},\bar{\mathbf{Y}})\boldsymbol{a})$\tcc*{Eq. \eqref{eq:K_evd}}
		$\mathbf{U}',\mathbf{\Sigma}',\mathbf{V}'\leftarrow\mathrm{SVD}(\mathbf{\Lambda}^{-\frac{1}{2}}\mathbf{V}^\top\kappa(\bar{\mathbf{Y}},\mathbf{Y}^*)\boldsymbol{a}^{(l)})$;\\
		$\bar{\boldsymbol{\alpha}}\leftarrow\mathbf{V}\mathbf{\Lambda}^{-\frac{1}{2}}\mathbf{U}'{\mathbf{V}'}_{:,1:n}^\top$\tcc*{Eq. \eqref{eq:projected_alpha}}
		$\bar{\Theta}\leftarrow(\bar{\mathbf{A}},\bar{\mathbf{Y}},\bar{\boldsymbol{\alpha}},\bar{\boldsymbol{\beta}})$;\\
		\For{$i=1$ \KwTo $K$}{
			$\mathbf{Q}_i\leftarrow\argmax_{\mathbf{Q}\in O(n)}\rho(\bar{\Theta},\mathbf{Q}\cdot\Theta_i)$;
		}
	}
	\caption{Fr\'echet mean computation}
	\label{alg:vicuna}
\end{algorithm}

Fr{\'e}chet means are in general not unique. Besides, we can not expect Algorithm \ref{alg:vicuna} to find a global minimum due to the non-convexity of the whole problem. It should thus not be thought of as a method for finding \emph{the} representative point of a finite subset of $\mathcal{K}_{n,p,\kappa}$, but rather for finding a point that is not far away of any of its elements. Note that the algorithm always terminates, since the cost function is non-negative and the algorithm iterations do not increase it.
\section{Experiments}
\label{sec:exp}
\subsection{Overview}
This section analyzes the performance of the presented alignment distance on dynamic texture and dynamic scene recognition tasks performed in Matlab on an Intel Core i5-2400 machine. To this end, we convert each frame to a histogram representation. LBP \cite{ojala1996comparative} and BoW \cite{fei2005bayesian}  are chosen, due to their successful applications to still-image texture and scene recognition respectively. The performance of the KLDS parameters in combination with the presented alignment distance was compared against KLDS parameters in combination with the Maximum Singular Value distance proposed in \cite{chaudhry2009histograms} and the Martin distance \cite{chan2007classifying}. In order to put the numbers into context, state-of-the art results from recent publications are included. Throughout all experiments, the alignment distance is computed via the Jacobi-type method which is initialized twice for each sign of the determinant of $\mathbf{Q}$ once. The initializations are determined by creating random sets of orthogonal matrices and choosing the element with the lowest cost function value within. The code for reproducing the NCC experiments will be made available on IEEE Xplore upon publication.

\subsection{Dynamic Textures}
\subsubsection{Database}
The DynTex database \cite{dyntex} is a collection of high-resolution RGB texture videos. Three different subsets of DynTex have been compiled and labeled for recognition benchmarking.
\begin{itemize}
	\item \emph{DynTex Alpha}: The Alpha subset is composed of 60 dynamic textures divided into 3 classes, each containing 20 videos of \emph{Sea}, \emph{Grass}, and \emph{Trees}, respectively.
	\item \emph{DynTex Beta}: The Beta dataset is composed of 162 dynamic textures divided into 10 classes. The classes with the number of samples indicated in brackets are \emph{Sea} (20), \emph{Vegetation} (20), \emph{Trees} (20), \emph{Flags} (20), \emph{Calm Water} (20), \emph{Fountains} (20), \emph{Smoke} (16), \emph{Escalator} (7), \emph{Traffic} (9) and \emph{Rotation} (10).
	\item \emph{DynTex Gamma}: The Gamma dataset is composed of 264 dynamic textures divided into 10 classes. The classes with the number of samples indicated in brackets are \emph{Flowers} (29), \emph{Sea} (38), \emph{Naked trees} (25), \emph{Foliage} (35), \emph{Escalator} (7), \emph{Calm water} (30), \emph{Flags} (31), \emph{Grass} (23), \emph{Traffic} (9) and \emph{Fountains} (37).
\end{itemize}
\begin{figure}
	\begin{center}
		\includegraphics[scale=0.6]{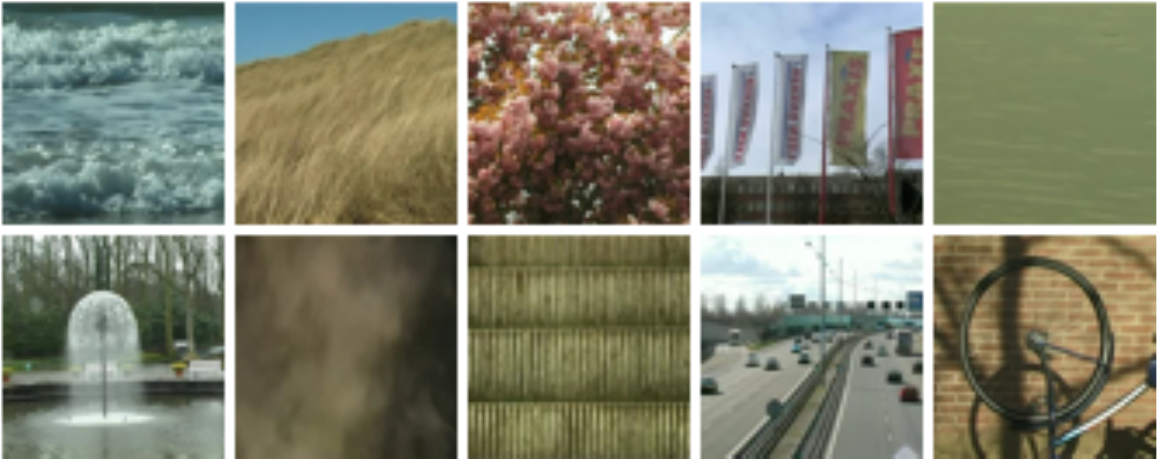}
	\end{center}
	\caption{DynTex Beta Video Frames}
	\label{fig:dyntex_beta}
\end{figure}
Fig. \ref{fig:dyntex_beta} depicts frames from the DynTex Beta Collection.
\subsubsection{Implementation Details}
The videos were converted to grayscale. An LBP histogram was computed for each frame via a third-party toolbox \cite{skarbnik2015}. The SoB KLDS parameters of order $n=5$ were computed by means of Algorithm \ref{alg:llama} from each stream of LBP histograms. Besides 1-NN performance, we are interested in the ability of Algorithm \ref{alg:vicuna} to represent sets of KLDSs via a representative point. To this end, we performed an NCC classification task additionally. Each class center is computed via Algorithm \ref{alg:vicuna}. In order to compare the performance to the Martin and the Maximum SV distance, the medoids were determined as class centers. For the NCC classification, $\lambda_\mu$ was set to $0$ for the sake of simplicity. Technically, this makes the alignment metric lose its positive definiteness on $\mathcal{K}_{n,p,\kappa}$. Practically, this is not a problem, since we simply work on an appropriate set projection of $\mathcal{K}_{n,p,\kappa}$. The parameter $\lambda_A$ was set to $0.25$ for all experiments, which produced the best results on the Alpha Dataset for NCC evaluation. For the 1-NN evaluation, two experiments have been performed to investigate the impact of the parameters. At first $\lambda_\mu$ was fixed to 0 and $\lambda_A$ varied in the range $0.15,0.35,\dots,1.95$. Then, $\lambda_A=0.25$ was kept and $\lambda_\mu$ was varied in the range $0,10,20,\dots,100$.
\subsubsection{Results}
Table \ref{tbl:dyntex_nn} shows the 1-NN classification results of SoB in comparison  to LBP-TOP \cite{zhao2007dynamic,qi2016dynamic}, aggregated salient features in three orthogonal planes (ASF-TOP) \cite{hong2016not} and \emph{Transferred Convolutional Net Features} (st-TCoF) \cite{qi2016dynamic}. The performance of SoB in combination with the aligned distance is plotted  against $\lambda_A$ in \ref{fig:lambda_A} and against $\lambda_\mu$ in Fig. \ref{fig:lambda_mu}.
\begin{table}
	\begin{center}
		\begin{tabular}{c || c | c | c}
			& Alpha & Beta & Gamma \\
			\hline \hline
			LBP-TOP  & 96.7\ \% & 85.8\ \% & 84.9\ \%\\ \hline
			ASF-TOP & 91.7\ \% & 86.4\ \% & 89.4\ \%\\ \hline
			st-TCoF & \textbf{98.3}\ \% & \textbf{98.2}\ \% & \textbf{98.1}\ \%\\ \hline
			\hline
			SoB + Martin & 91.7\ \% & 74.7\ \% & 64.4\ \%\\ \hline
			SoB + Max SV & 96.7\ \% & 84.0\ \% & 78.4\ \%\\ \hline
			SoB + Align & \textbf{98.3}\ \% & 90.1\ \% & 79.9\ \%\\ \hline
		\end{tabular}
	\end{center}
	\caption{Recognition rate on DynTex subsets: 1-NN}
	\label{tbl:dyntex_nn}
\end{table}
\begin{figure}
	\begin{center}
		\includegraphics[scale=0.35]{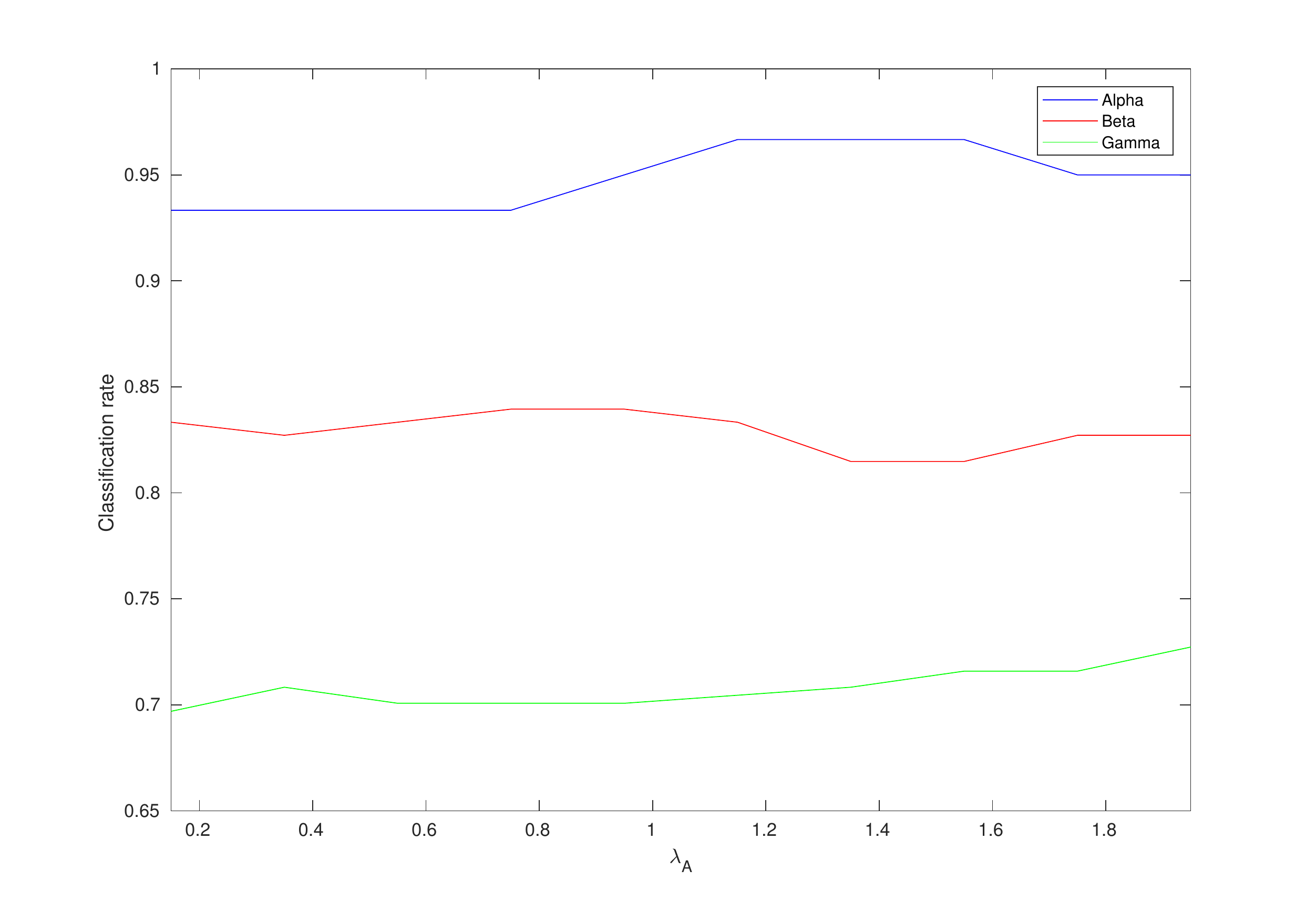}
	\caption{NN classification results of the DynTex datasets for different values of $\lambda_A$}
		\label{fig:lambda_A}
	\end{center}
\end{figure}
\begin{figure}
	\begin{center}
		\includegraphics[scale=0.35]{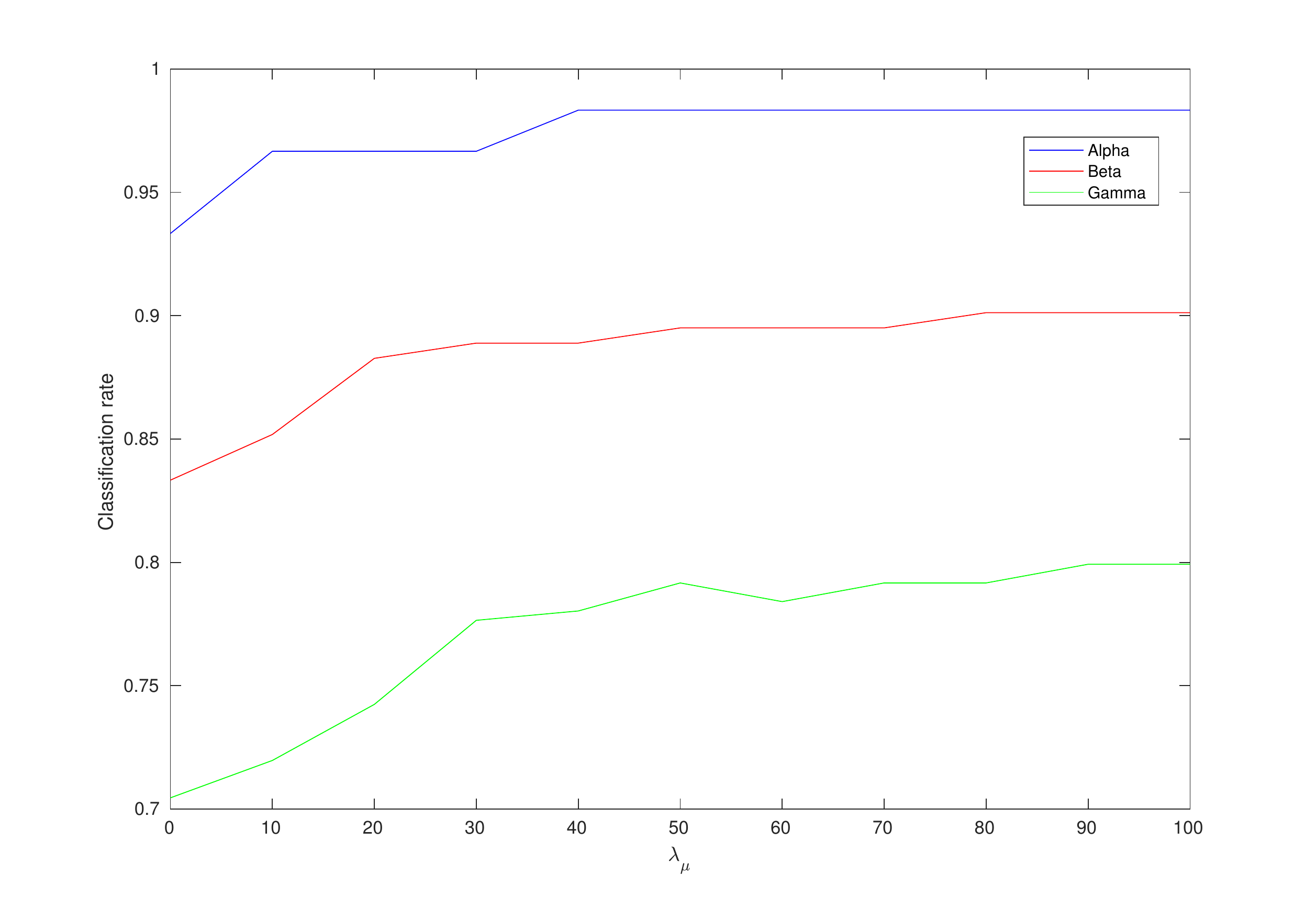}
		\caption{NN classification results of the DynTex datasets for different values of $\lambda_\mu$}
		\label{fig:lambda_mu}
	\end{center}
\end{figure}
SoB performs considerably better in combination with the alignment distance than in combination with the other two distance measures with a margin of at least 1.5 percentage points. Overall, it competes well with methods based on "shallow" representations, such as LBP-TOP or ASF-TOP outperforming them both on DynTex Alpha as well as on DynTex Beta. Still, the deep features learned by st-TCoF yield significantly better results. Table \ref{tbl:dyntex_ncc} shows the results of the NCC classification, which is considerably more challenging than the 1-NN classification and is more meaningful with regards to intra-class variability. For the Max SV and the Martin distance, the medoid was computed as an equivalent of the mean.  The alignment Fr\'echet mean of SoB sets provided significantly better result than the medoids with a margin of at least 3.3 percentage points. Beyond that, it yielded the best results published so far to our best knowledge, performing better than \emph{Dynamic Fractal Spectrum} (DFS) \cite{xu2011dynamic}, \emph{Spatiotemporal Curvelet Transform} (2D+T Curvelet) \cite{dubois2015characterization}, Orthogonal Tensor Dictionary Learning (OTDL), \cite{quan2015dynamic} on DynTex Alpha and Beta. Regarding the Gamma dataset, it should be noted that the authors of \cite{quan2015dynamic} wrongly describe it as a 10-class database of 275 samples, indicating that the experimental results for OTDL and DFS could have been produced on a different version from the one used in this work, while the evaluation of 2D+T Curvelet \cite{dubois2015characterization} was performed on an 11 class adaptation of this dataset, which is why these results are excluded.
\begin{table}
	\begin{center}
		\begin{tabular}{c || c | c | c}
			& Alpha & Beta & Gamma \\
			\hline \hline
			DFS  & 83.6\ \% & 65.2\ \% & 60.8\ \%\\ \hline
			2D+T Curvelet & 85.0\ \% & 67.0\ \% & -\\ \hline
			OTDL & 86.6\ \% & 69.0\ \% & 64.2\ \%\\ \hline
			\hline
			SoB + Martin & 83.3\ \% & 51.9\ \% & 41.7\ \%\\ \hline
			SoB + Max SV & 85.0\ \% & 59.9\ \% & 54.9\ \%\\ \hline
			SoB + Align & \textbf{88.3\ \%} &\textbf{ 75.3\ \%} & \textbf{67.1\  \%}\\ \hline
		\end{tabular}
	\end{center}
	\caption{Recognition Rate on DynTex subsets: NCC}
	\label{tbl:dyntex_ncc}
\end{table}
Table \ref{tbl:beta_confusion} shows the confusion matrix for NCC classification on DynTex Beta.
\begin{table}
	\renewcommand{\arraystretch}{1.25}
	\begin{center}
		\begin{tabular}{ r | c c c c c c c c c c}
			& \tiny{S} & \tiny{V} & \tiny{T} & \tiny{F} & \tiny{Cw} & \tiny{Fo} & \tiny{Sm} & \tiny{E} & \tiny{Tf} & \tiny{R}\\
			\hline
			\tiny{Sea} & \tiny{\textbf{18}} &  & & & \tiny{2} & & & & & \\
			\tiny{Vegetation} &  & \tiny{\textbf{15}} & \tiny{1} & & & \tiny{1} &  & \tiny{3} & \\
			\tiny{Trees} & &\tiny{2} & \textbf{\tiny{8}} & & & \tiny{3} & \tiny{3} & & \tiny{3} & \tiny{1} \\
			\tiny{Flags} & &  &  & \textbf{\tiny
				19} &  &  &  &  &  & \tiny{1}
				 \\
			\tiny{Calm water} & \tiny{1} &  &  &  & \textbf{\tiny{18}} &  & \tiny{1} & &  & \\
			\tiny{Fountain} & & \tiny{3} & \tiny{2} &  & \tiny{1} & \textbf{\tiny{11}} & \tiny{1} &  & \tiny{2} & \\
			\tiny{Smoke} &  &  &  & \tiny{4} &  &  & \textbf{\tiny{12}} &  & &  \\
			\tiny{Escalator} &  &  &  &  &  &  &  & \textbf{\tiny{6}} & \tiny{1} &  \\
			\tiny{Traffic} &  &  &  &  &  & & \tiny{1} &  & \textbf{\tiny{8}} &  \\
			\tiny{Rotation} &  &  &  &  &  & \tiny{2} & &  & \tiny{1} & \textbf{\tiny{7}} \\
		\end{tabular}
	\end{center}
	\caption{Confusion matrix for Dyntex Beta (NCC Classification)}
	\label{tbl:beta_confusion}
\end{table}

\subsubsection{Runtime}
On average, the computation of the alignment distance for two KLDSs with state space dimension $n=5$ took 0.0316s. The overall runtimes for the NN (NCC) experiments for DynTex Alpha, Beta and Gamma with SoB + Align were  112.8s (539.47s), 828.98s (1,194.25s) and 2,205.2s (3,364.8s) respectively. The time used for feature computation was not considered.

\subsection{Dynamic Scenes}
\subsubsection{Database}
The YUPENN data set \cite{derpanis2012dynamic} is comprised of fourteen dynamic scene classes, \emph{Beach}, \emph{Elevator}, \emph{Forest fire}, \emph{Fountain},  \emph{Highway}, \emph{Lightning Storm}, \emph{Ocean}, \emph{Railway}, \emph{Rushing river}, \emph{Clouds}, \emph{Snowing}, \emph{City street}, \emph{Waterfalls} and \emph{Windmill farm}, each class containing 30 color videos of different durations and resolutions. Fig. \ref{fig:yupenn} depicts the different classes.
\begin{figure}
	\begin{center}
		\includegraphics[scale=0.6]{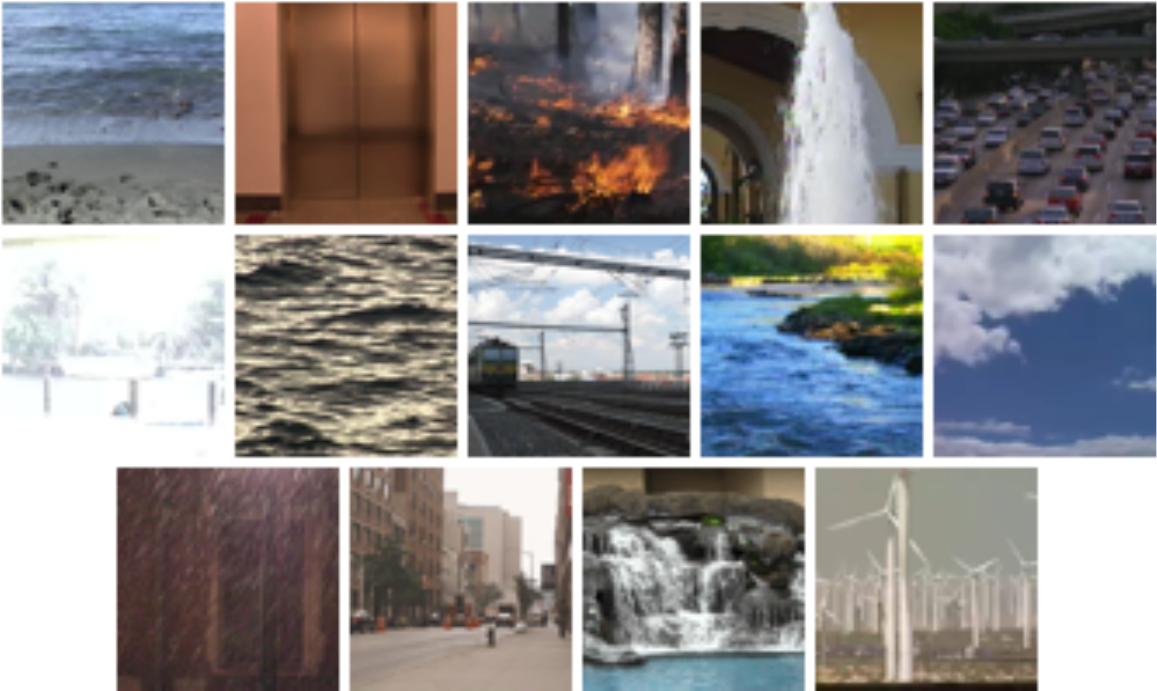}
	\end{center}
	\caption{YUPENN Database}
	\label{fig:yupenn}
\end{figure}
\subsubsection{Implementation Details}
OpenCV was used to compute the SURF features for the BoW model of size $p=500$.
For the 1-NN evaluation it needed to be made sure that the codebooks are not learned from the tested video samples. To this end, the dataset was divided into three equally large subsets and for each testing sample the codebook generated from the two subsets not containing the sample is used. That way each video sample in the dataset was evaluated by converting the dataset into a set of histogram streams with respect to one of three codebooks. The resulting classification rate was determined by averaging the classification rates for the three codebooks.
The histogram streams were converted to KLDSs of state space dimension $n=10$ which is chosen higher than for the previous experiment in order to account for the higher dimensionality of the feature vectors. Initially, $\lambda_A$ is set to $0.25$ and $\lambda_\mu$ to $0$, as was done for the NCC experiments on the DynTex database. The experiment was repeated for different values of $\lambda_A$ and $\lambda_\mu$.
\subsubsection{Results}
\begin{table}
	\begin{center}
		\begin{tabular}{c | c | c | c}
			Martin & Max SV & Align w/o tuning & Align best \\
			\hline
		    83.3 & 80.7 & 86.4 \% & 88.8 \%
		\end{tabular}
	\end{center}
	\caption{Recognition Rate on YUPENN dataset: SoB}
	\label{tbl:yupenn_sob}
\end{table}
\begin{table}
	\begin{center}
		\begin{tabular}{c | c | c | c }
			SOE & TSVQ & BoST & st-TCoF  \\
			\hline
			74\ \% & 69\ \% & 85\ \% & \ \textbf{98\ \%}
		\end{tabular}
	\end{center}
	\caption{Recognition Rate on YUPENN dataset: State of The Art}
	\label{tbl:yupenn_soa}
\end{table}
Table \ref{tbl:yupenn_sob} shows the classification result for SoB based approaches as described in this work. For the configuration that was used in the previous NCC experiments on the DynTex datasets, the alignment distance performs significantly better than the other two distance measure with 86.4 \% correct classifications. Of all tested parameters, $\lambda_\mu=0.8$ and $\lambda_A=0.5$ yielded the best results and could further improve the classification rate up to 88.8 \%. Table \ref{tbl:yupenn_soa} shows the classification results of recently proposed and well-received approaches: \emph{Spatiotemporal Oriented Energy} (SOE) \cite{derpanis2012dynamic}, \emph{Tree-Structured Vector Quantization (TSVQ)} \cite{nister2006scalable}, \emph{Bag of System Trees} (BoST) \cite{mumtaz2015scalable} and  \emph{Transferred Convolutional Net Features} (st-TCoF) \cite{qi2016dynamic}.
Like in the DynTex experiments, the st-TCoF descriptors outperform all other approaches in 1-NN classification with a considerable margin. These results are in line with the success of convolutional neural networks (CNNs) in the area of computer vision in the recent years. Hence, SoBs based on bags learned by CNNs could lead to a significant improvement in performance of the presented approach.
\subsubsection{Runtime}
On average, the computation of the alignment distance for two KLDSs with state space dimension $n=10$ took 0.1781s. The overall runtime for the experiment with non-tuned SoB + Align was 31,423s. The time used for feature computation was not considered.

\section{Conclusion}
Generative and statistical models are widely used in recently presented video and image descriptors. This paper discusses the modeling of videos as streams of histograms generated by a KLDS. As a framework for recognition and classification, this work presents a distance measure on KLDS parameters that allows for computing the dissimilarity of pairs and procrustean means of sets of visual processes described by temporally evolving histograms. The resulting framework competes well with state-of-the-art approaches on widely used dynamic scene and dynamic texture benchmarks. In particular, employing procrustean alignment means for NCC classification outperforms state-of the art approaches on the task of classifying dynamic textures.
\bibliographystyle{IEEEtran}


%
\begin{IEEEbiography}[{\includegraphics[width=1in,clip,keepaspectratio]{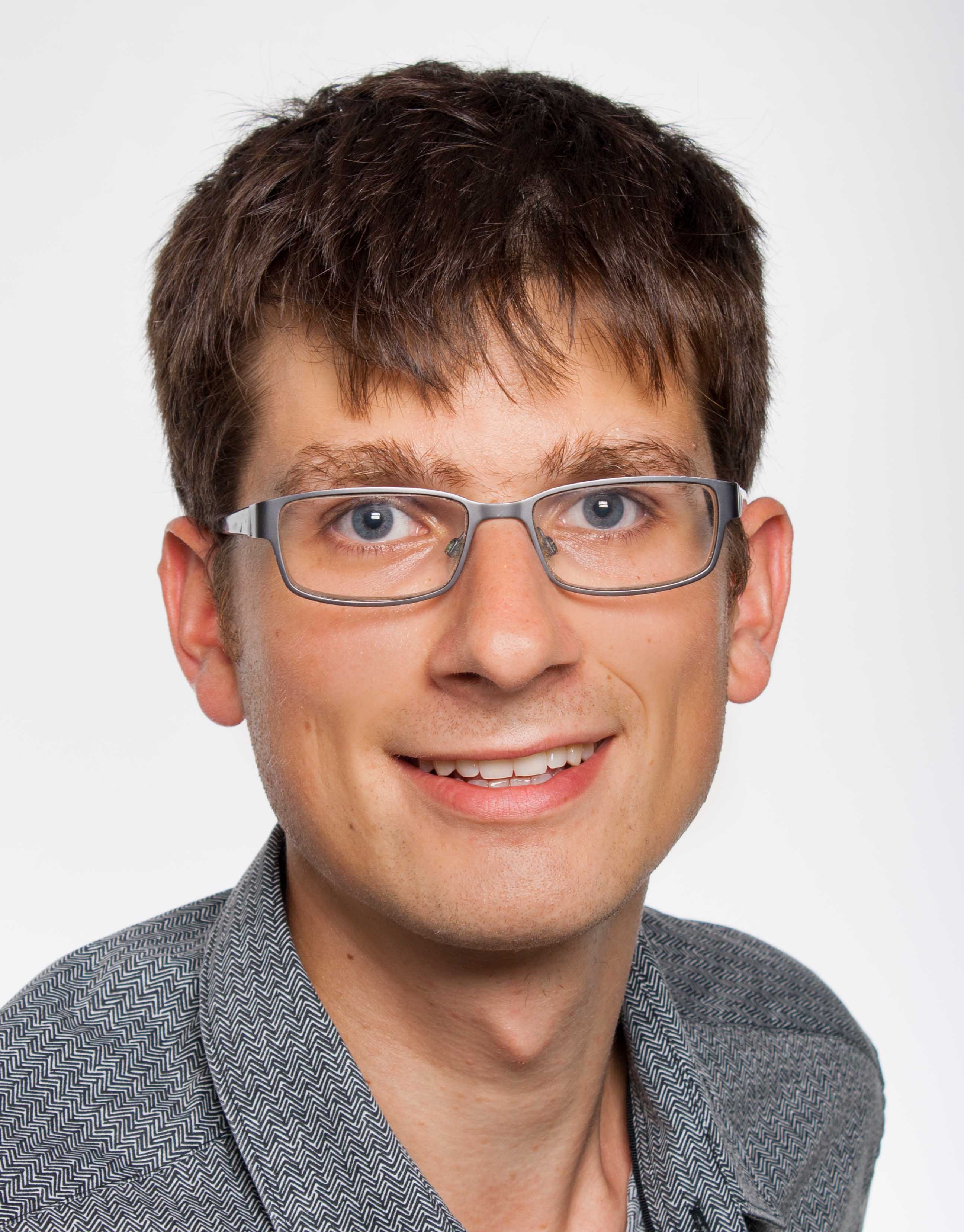}}]{Alexander Sagel} received his Bachelor's degree in Computer Engineering from the Technical University of Hamburg, Germany, and his Master's degree in Electrical Engineering from the Technical University of Munich, Germany. He is currently a doctoral candidate at the Chair for Data Processing within the Department of Electrical and Computer Engineering at the Technical University of Munich, Germany.
\end{IEEEbiography}
\begin{IEEEbiography}[{\includegraphics[ width=1in ,keepaspectratio,trim=0 1.4in 0 0.8in  ,clip]{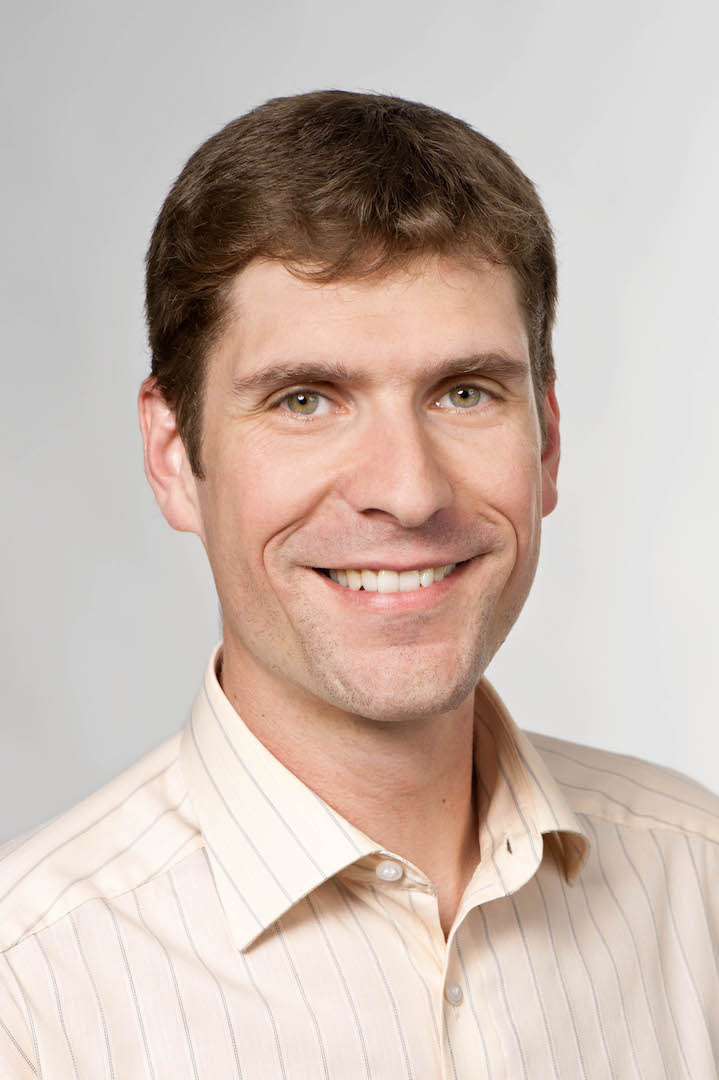}}]{Martin Kleinsteuber}
	received his Ph.D. in Mathematics from the University of W\"urzburg, Germany, in 2006. After post-doc positions at National ICT Australia Ltd., the Australian National University, Canberra, Australia, and the University of W\"urzburg, he has been appointed assistant professor for geometric optimization and machine learning at the Department of Electrical and Computer Engineering, TU Munich, Germany, in 2009. He won the SIAM student paper prize in 2004 and the Robert-Sauer-Award of the Bavarian Academy of Science in 2008 for his works on Jacobi-type methods on Lie algebras. Since 2016, he is leading the Data Science Group at Mercateo AG, Munich.
\end{IEEEbiography}
\end{document}